\documentclass{article} % For LaTeX2e
\usepackage{iclr2027_conference,times}

\usepackage{amsmath,amsfonts,bm}

\def\eqref#1{equation~\ref{#1}}

\def\1{\bm{1}}

\DeclareMathAlphabet{\mathsfit}{\encodingdefault}{\sfdefault}{m}{sl}
\SetMathAlphabet{\mathsfit}{bold}{\encodingdefault}{\sfdefault}{bx}{n}

\usepackage[utf8]{inputenc}
\usepackage[T1]{fontenc}
\usepackage{microtype}
\usepackage{graphicx}
\usepackage{subcaption}
\usepackage{booktabs}
\usepackage[table]{xcolor}
\usepackage{hyperref}
\hypersetup{hidelinks}
\usepackage{url}

\usepackage{algorithm}
\usepackage{algorithmic}

\usepackage{amsmath}
\usepackage{amssymb}
\usepackage{mathtools}
\usepackage{amsthm}
\usepackage{etoolbox}
\usepackage{bm}
\usepackage{empheq}
\usepackage{wrapfig}
\usepackage{placeins}
\usepackage{needspace}

\usepackage[capitalize,noabbrev]{cleveref}

\usepackage[most]{tcolorbox}

\newtcolorbox{graybox}{
    colback=gray!8,
    colframe=gray!32,
    boxrule=0.3pt,
    arc=2pt,
    left=0pt,
    right=0pt,
    top=0pt,
    bottom=0pt
}

\newtheoremstyle{plaininline}
  {\topsep}{\topsep}{\itshape}{}{\bfseries}{.}{ }{}

\newtheoremstyle{definitioninline}
  {\topsep}{\topsep}{}{}{\bfseries}{.}{ }{}

\newtheoremstyle{remarkinline}
  {\topsep}{\topsep}{}{}{\itshape}{.}{ }{}

\theoremstyle{plaininline}
\newtheorem{theorem}{Theorem}[section]
\newtheorem{proposition}[theorem]{Proposition}
\newtheorem{lemma}[theorem]{Lemma}
\newtheorem{corollary}[theorem]{Corollary}

\theoremstyle{definitioninline}

\newtheorem{problem}[theorem]{Problem}
\newtheorem{remark}[theorem]{Remark}

\theoremstyle{remarkinline}

\AtBeginEnvironment{theorem}{\leavevmode}
\AtBeginEnvironment{proposition}{\leavevmode}
\AtBeginEnvironment{lemma}{\leavevmode}
\AtBeginEnvironment{corollary}{\leavevmode}
\AtBeginEnvironment{definition}{\leavevmode}
\AtBeginEnvironment{assumption}{\leavevmode}
\AtBeginEnvironment{problem}{\leavevmode}
\AtBeginEnvironment{remark}{\leavevmode}

\title{Distributionally Robust Schr\"{o}dinger Bridge}

\author{%
\begin{minipage}[t]{\dimexpr\textwidth-2\tabcolsep\relax}
\centering\normalfont
\textbf{Jinhwan Sul}\quad
\textbf{Panagiotis Theodoropoulos}\quad
\textbf{Vincent Pacelli}\\[2pt]
\textbf{Jaemoo Choi}\quad
\textbf{Evangelos Theodorou}\\[7pt]
Department of Aerospace Engineering \\
Georgia Institute of Technology, Atlanta, GA, USA\\[4pt]
{\small\texttt{\{jsul7,ptheodor3,vpacelli3,jchoi843,evangelos.theodorou\}@gatech.edu}}
\end{minipage}
}

\iclrfinalcopy
\makeatletter
\patchcmd{\@maketitle}{\lhead{Published as a conference paper at ICLR 2027}}{\lhead{}}{}{}
\makeatother
\renewcommand{\headrulewidth}{0pt}
\hypersetup{
  pdftitle={Distributionally Robust Schr\"{o}dinger Bridge},
  pdfauthor={Jinhwan Sul, Panagiotis Theodoropoulos, Vincent Pacelli, Jaemoo Choi, Evangelos Theodorou}
}

\begin{document}

\maketitle

\begin{abstract}
    Schrödinger bridge (SB) learns stochastic transport between prescribed initial and target distributions. When the initial distribution shifts at test time, the learned dynamics can fail to recover the target distribution. We introduce the Distributionally Robust Schrödinger Bridge (DRSB), which learns a single controller that accounts for uncertainty in the initial distribution. The DRSB objective consists of control energy and a KL penalty between the resulting terminal distribution and the target distribution. DRSB seeks a single controller that minimizes the worst-case value of this objective as the initial distribution varies within an ambiguity set around the nominal distribution. We derive an exact variational formulation of this objective and connect its fixed-terminal-cost subproblem to stochastic optimal control and distributionally robust optimization. This formulation motivates an alternating algorithm that updates the adversarial initial distribution, estimates the terminal log-density ratio, and trains the controller. We develop Wasserstein and Sinkhorn variants using stochastic control optimality conditions to approximate the gradients required for adversarial updates. Experiments on two-dimensional transport tasks and image-to-image translation show improved robustness to input perturbations relative to standard SB, with a tradeoff in nominal performance. On Gaussian mixture transport, Sinkhorn DRSB also achieves lower mean sliced Wasserstein distance than fixed-level noise augmentation at both tested unseen noise levels.
\end{abstract}

\section{Introduction}

Generative modeling is fundamentally a task of learning a transport map from a source distribution to a target distribution. While early approaches, such as GANs~\citep{goodfellow2014generative}, focused on learning static maps, recent approaches such as diffusion models~\citep{ho2020denoising,song2021score} learn dynamic transport maps via stochastic differential equations (SDEs).
The Schr\"{o}dinger Bridge (SB) problem generalizes standard diffusion models and likewise seeks an optimal transport mapping described by an SDE.

By definition, SB seeks a stochastic process that connects two arbitrary distributions and minimizes the KL divergence with respect to a reference stochastic process~\citep{schrodinger1931umkehrung,leonard2014survey}.
SB has been used in applications such as sampling~\citep{liu2025adjoint}, image-to-image translation~\citep{theodoropoulos2024feedback}, and image restoration~\citep{liu20232}.
However, the SB dynamics are trained for a prescribed initial distribution. When this distribution shifts at test time, the same dynamics produce a different terminal distribution, violating the original target marginal constraint. This motivates a distributionally robust SB formulation that accounts for uncertainty in the initial distribution.

In this paper, we formulate the \textbf{Distributionally Robust Schr\"{o}dinger Bridge (DRSB)} problem to achieve robust generation under initial distribution shifts.
The goal of DRSB is to find a single controller that transports every initial distribution in a prescribed set close to the target distribution. This set, called the ambiguity set, consists of distributions within a prescribed distance of the nominal initial distribution. 
We connect DRSB to stochastic optimal control through a terminal cost that penalizes mismatch between the induced terminal distribution and the target distribution. 
This cost takes the form of the log-density ratio between these distributions, which we approximate with a neural network to provide a training signal for the controller.
Since the induced terminal distribution depends on both the controller and the initial distribution, the terminal cost must be re-estimated as these change.

For a fixed terminal cost, the resulting subproblem has the standard distributionally robust optimization (DRO) form~\citep{mohajerin2018data,gao2023distributionally}, where the control variable serves as the DRO decision variable.
Strong duality results from the DRO literature provide a dual formulation of the inner worst-case maximization over the ambiguity set. 
We update the adversarial initial distribution using this dual formulation and train the controller via adjoint matching~\citep{domingo2024adjoint} to minimize the expected transport cost under the current adversarial distribution.
We alternate these updates with re-estimation of the terminal cost, training its network as a binary classifier that distinguishes the resulting terminal samples from target samples.

We develop Wasserstein and Sinkhorn variants of DRSB, with adversarial input updates derived from their DRO dual formulations.
Using an optimality condition from stochastic optimal control theory~\citep{kappen2005path}, we approximate cost gradients with controller evaluations.
The Wasserstein variant iteratively updates adversarial inputs.
For the Sinkhorn variant, a first-order approximation of the cost yields a Gaussian sampling rule for each nominal input and a closed-form update of the dual variable.
This approximation allows adversarial inputs to be sampled without iterative inner optimization.

We evaluate DRSB on 2D Gaussian-to-Gaussian and GMM-to-GMM transport tasks and on image-to-image translation.
The results show that DRSB improves robustness to noisy and shifted initial distributions compared with standard SB.
On the 2D transport tasks, Sinkhorn DRSB outperforms Wasserstein DRSB under most tested perturbation conditions.

The main contributions of this paper are summarized as follows.
\begin{itemize}
    \item We formulate Distributionally Robust Schr\"{o}dinger Bridge problems to address initial distribution shift.
    \item We derive an exact variational formulation of DRSB and connect its fixed-terminal-cost subproblem to SOC and DRO.
    \item We develop block update algorithms for the Wasserstein and Sinkhorn variants using SOC optimality conditions.
\end{itemize}

\section{Preliminaries}

\subsection{Schr\"{o}dinger Bridge under Initial Distribution Uncertainty}
The standard Schr\"{o}dinger bridge (SB) problem~\citep{schrodinger1931umkehrung,leonard2014survey} is
\begin{problem}[Schr\"{o}dinger Bridge]
\label{prob:SB}
\begin{subequations}
\begin{align}
    & \,\, \min_{u_t}\, \sigma^2\text{KL}(p^{u,\mu} \|p^{\text{base}}) = \min_{u_t}\mathbb{E}_{X \sim p^{u,\mu}}\left[ \int_0^1 \frac{1}{2}\| u^{\theta}_t(X_t)\|^2 dt\right], \\
    \text{s.t.} \quad & dX_t =  u^\theta_t(X_t) dt + \sigma dW_t, \quad X_0 \sim \mu(X_0), \quad X_1 \sim \nu(X_1). \label{eq:dyn}
\end{align}
\end{subequations}
\end{problem}
Here, $p^{u,\mu}$ denotes the path measure induced by \cref{eq:dyn} with the initial law $\mu$, and $p^{\text{base}}:= p^{u:=0}$ denotes the base path measure when $u_t:=0$. The base path measure has the same initial law $\mu$.
The equality assumes the conditions for Girsanov's theorem.
$X_t \in \mathbb{R}^d$ is the $d$-dimensional state vector at time $t$. The control $u^\theta: [0,1] \times \mathbb{R}^d \to \mathbb{R}^d$ is parameterized by $\theta$. $W_t$ is a standard $d$-dimensional Brownian motion, and $\sigma$ is a constant positive diffusion coefficient.
$p^{u,\mu}_0$ and $p^{u,\mu}_1$ are the marginal distributions of $p^{u,\mu}$ at times $t=0$ and $t=1$, respectively.
Then, by definition, $p_0^{u,\mu} = \mu$ always holds.
The Schr\"{o}dinger bridge (SB) problem seeks the optimal control by minimizing the KL divergence between $p^{u,\mu}$ and $p^\text{base}$ while transporting the initial data distribution $\mu$ to the target data distribution $\nu$.

\paragraph{Deployment with an unknown initial distribution.}
Let $u^\star$ solve Problem~\ref{prob:SB} for the nominal initial distribution $\mu$.
For this fixed controller, the Fokker--Planck equation defines the map $\Phi_{\text{SB}}$ from initial to terminal distributions,
\begin{equation}
    \Phi_{\text{SB}}: p^{u^\star, \mu}_0 \mapsto p^{u^\star,\mu}_1, \quad \Phi_{\text{SB}}(\mu)=\nu.
\end{equation}
At deployment, inputs may instead follow an unknown distribution $\mu_{\text{test}}\neq\mu$.
Applying the same controller need not preserve terminal matching, \emph{i.e.}
\begin{equation}
    \Phi_{\text{SB}}(\mu_{\text{test}}) \neq \nu \quad \text{in general}.
\end{equation}
We consider a deployment scenario where the test-time initial law is not known during training and a single learned controller must be used without test-time retraining.

\Cref{fig:concept} illustrates this deployment test setting using point clouds.
In \cref{fig:concept_sb}, blue circles represent samples from the nominal initial distribution $\mu$, and purple squares represent samples from a perturbed distribution $\mu_{\text{test}}$.
The same state-dependent dynamics act on both input clouds, but a change in the distribution of starting points can change the terminal cloud.
The nominal terminal distribution matches $\nu$, whereas the perturbed terminal distribution may deviate from this unchanged target.

We represent uncertainty in the test-time initial distribution by a prescribed set of plausible initial distributions, called an ambiguity set.
\Cref{fig:concept_drsb} illustrates the goal of learning one controller for this set.
Without a known probability law over deployment shifts, we aim to minimize the worst-case sum of control energy and terminal mismatch over the ambiguity set, using the same learned controller at deployment.

\begin{figure}[t]
    \centering
    \begin{subfigure}[b]{0.44\linewidth}
        \centering
        \includegraphics[width=\linewidth]{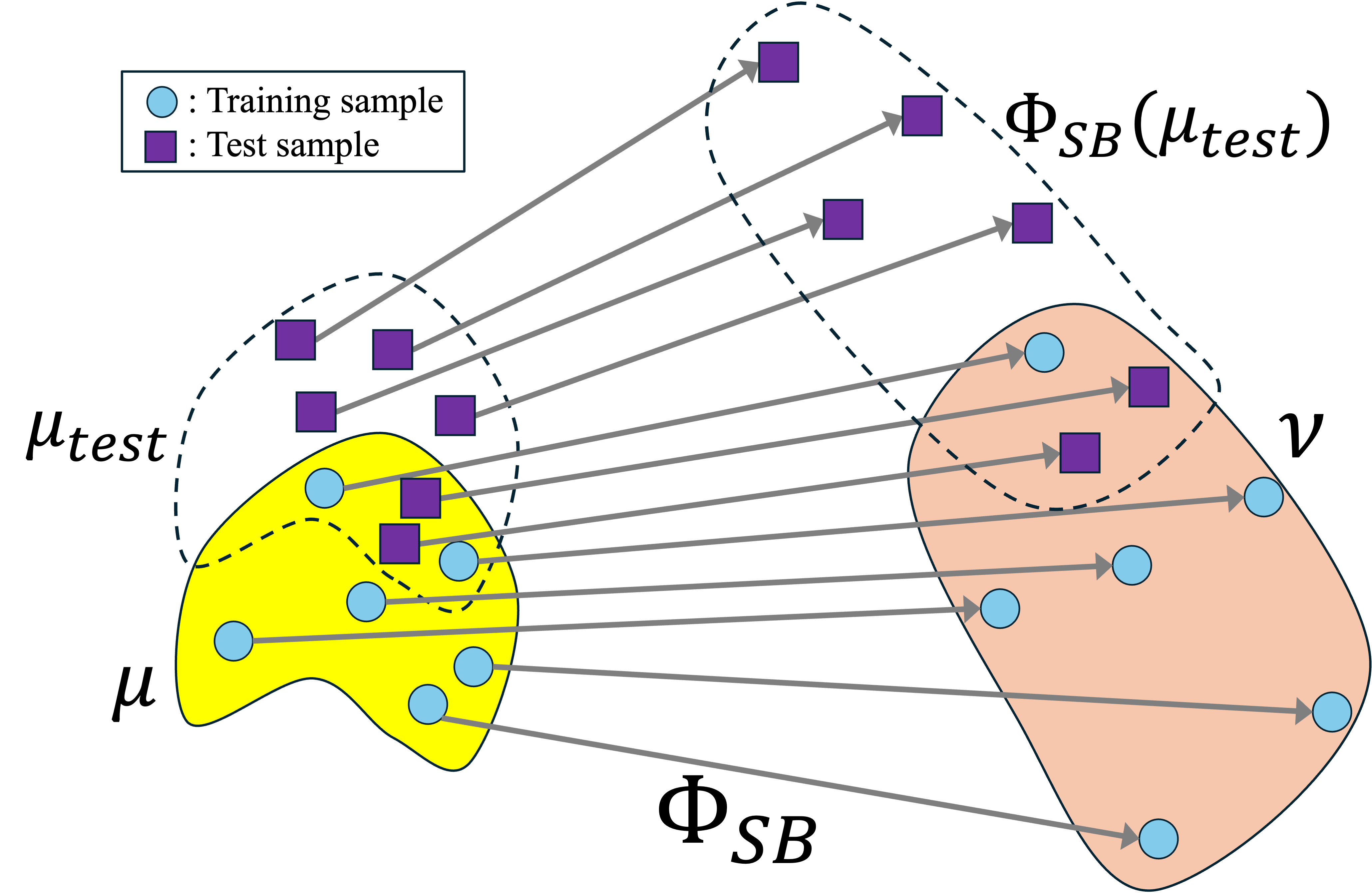}
        \caption{Standard SB}
        \label{fig:concept_sb}
    \end{subfigure}
    \hfill
    \begin{subfigure}[b]{0.48\linewidth}
        \centering
        \includegraphics[width=\linewidth]{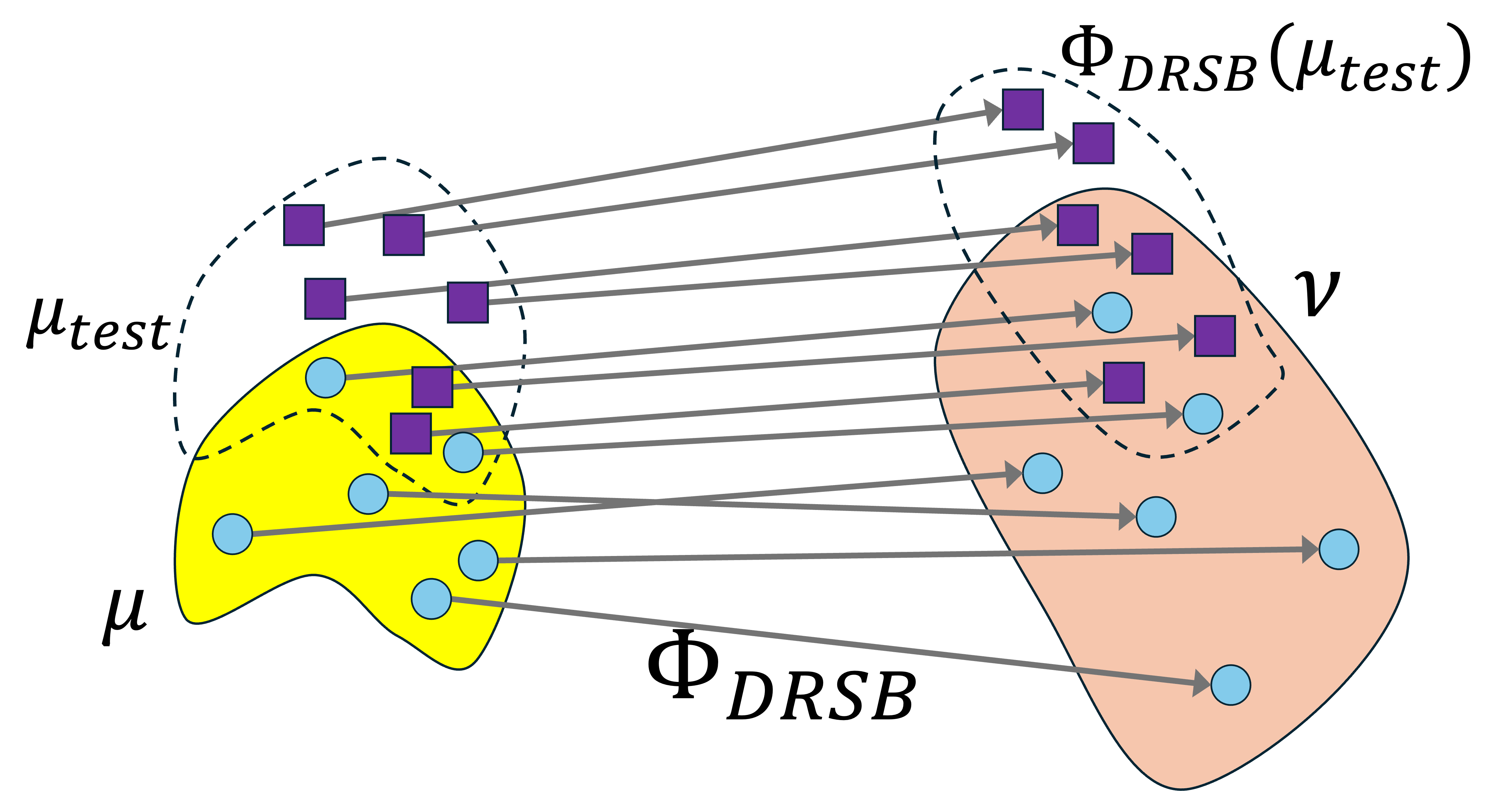}
        \caption{DRSB (ours)}
        \label{fig:concept_drsb}
    \end{subfigure}
    \caption{Deployment under initial distribution uncertainty. Blue circles denote nominal samples from $\mu$, and purple squares denote shifted test samples from $\mu_{\text{test}}$. The desired target $\nu$ is unchanged. (a) A standard SB controller matches $\nu$ for $\mu$, but this matching need not persist under an initial distribution shift. (b) DRSB trains one controller to reduce the worst-case sum of control energy and terminal mismatch over an ambiguity set of initial distributions.}
    \label{fig:concept}
\end{figure}

\subsection{Distributionally Robust Optimization}

In stochastic optimization, the underlying data distribution is often unknown, and only a few data samples are available.
DRO hedges against this uncertainty by optimizing under the worst-case distribution within a prescribed set of distributions called the ambiguity set~\citep{wiesemann2014distributionally}.
The DRO problem takes the minimax form
\begin{equation}
    \min_{u \in \mathcal{U}} \max_{\mu \in \mathfrak{M}_{\epsilon}(\hat{\mu})} \mathbb{E}_{X_0\sim \mu}[V(u, X_0)],
\label{eq:DRO}
\end{equation}
where the outer minimization selects the decision variable $u$, and the inner maximization identifies the worst-case distribution $\mu$ within the ambiguity set $\mathfrak{M}_{\epsilon}(\hat{\mu}) := \{ \mu : D(\mu, \hat{\mu}) \le \epsilon \}$, defined as the set of distributions within distance $\epsilon$ of the nominal distribution $\hat{\mu}$ under distributional distance $D$.

\section{Distributionally Robust Schr\"{o}dinger Bridge}
\label{sec:drsb}

\subsection{Distributional Robustness in SB and Connection to SOC}

We formulate and solve the Distributionally Robust Schr\"{o}dinger Bridge (DRSB) problem, which seeks a control that can robustly steer shifted test-time distributions $\mu_{\text{test}}$ toward the target distribution.
All proofs are provided in Appendix~\ref{appendix:drsb_proofs}.

The DRSB formulation is
\begin{graybox}
\begin{problem}[Distributionally Robust Schr\"{o}dinger Bridge]
\label{prob:drsb}
\begin{subequations}
\label{eq:drsb}
\begin{align}
    \min_{u} \max_{\mu \in \mathfrak{M}_{\epsilon}(\hat{\mu})}  & \mathbb{E}_{X \sim p^{u,\mu}} \left[ \int_0^1  \frac{1}{2}\| u^{\theta}_t(X_t)\|^2 dt\right] + \rho \text{KL}(p^{u,\mu}_1 \| \nu), \label{eq:drsb_obj} \\
    \text{s.t.} \quad & dX_t = u^{\theta}_t(X_t) dt + \sigma dW_t, \quad X_0 \sim \mu, \label{eq:drsb_dyn}\\
    & \mathfrak{M}_{\epsilon}(\hat{\mu}) = \{ \mu : D(\mu, \hat{\mu}) \le \epsilon \}. \label{eq:drsb_ambiguity}
\end{align}
\end{subequations}
\end{problem}
\end{graybox}
Here, the initial distribution $\mu$ is no longer assumed to be known exactly.
Instead, the inner maximization in \cref{eq:drsb_obj} finds the worst-case initial distribution from the ambiguity set $\mathfrak{M}_{\epsilon}(\hat{\mu})$.
$\hat{\mu}$ is the empirical distribution $\hat{\mu} = \frac{1}{N}\sum_{i=1}^N \delta_{\hat{X}_0^i}$, where $\delta_{\hat{X}_0^i}$ is a Dirac delta function at each sample point $\hat{X}_0^i$.
The ambiguity set is centered at $\hat{\mu}$ with radius $\epsilon$ under the distributional distance $D$.

DRSB addresses the choice of a single transport controller before the test-time input distribution is known.
We consider input perturbations for which the desired target distribution $\nu$ remains unchanged.
The control-energy term in \cref{eq:drsb_obj} retains the transport-cost criterion of SB, while the soft penalty $\rho\,\mathrm{KL}(p^{u,\mu}_1\|\nu)$ relaxes the hard terminal constraint $p^{u,\mu}_1 = \nu$ in \cref{eq:dyn}.
Optimality in Problem~\ref{prob:drsb} is defined by minimizing the worst-case sum of control energy and the weighted terminal KL divergence over the ambiguity set.
The resulting controller need not preserve the nominal SB transport or be optimal for each initial distribution separately.
Nominal performance may therefore be traded for lower worst-case cost.
This formulation provides a criterion for selecting one controller when adaptation to each deployment distribution is not possible.
\vspace{-2pt}
\begin{remark}
For a nonnegative distance with $D(\mu,\hat{\mu})=0$ if and only if $\mu=\hat{\mu}$, when $\epsilon = 0$, the ambiguity set reduces to $\mathfrak{M}_0(\hat{\mu}) = \{\hat{\mu}\}$, eliminating the inner maximization. When additionally $\rho \to \infty$, the KL penalty enforces $p^{u,\mu}_1 = \nu$ and we recover the standard SB in Problem~\ref{prob:SB}.
\end{remark}

To optimize the DRSB objective through stochastic optimal control, we require a terminal cost that represents the mismatch between $p_1^{u,\mu}$ and $\nu$.
We introduce a terminal cost neural network $g^\phi:\mathbb{R}^d \rightarrow \mathbb{R}$ parameterized by $\phi$ that approximates the log-density ratio between $p_1^{u,\mu}$ and $\nu$.
\vspace{-2pt}
\begin{proposition}[Terminal cost approximation]
\label{prop:g_guidance}
For a fixed admissible pair $(u,\mu)$, suppose $p_1^{u,\mu}$ and $\nu$ have positive densities and finite KL divergence. Suppose the function class contains their log-density ratio and a terminal cost function $g^\phi:\mathbb{R}^d \rightarrow \mathbb{R}$ is an optimal solution of the optimization problem
\begin{equation}
    g^\phi = \arg\min_{g} \mathbb{E}_{X_1\sim p^{u,\mu}_1}\left[\log \left(1+e^{-g(X_1)}\right)\right] + \mathbb{E}_{X_1\sim \nu}  \left[ \log \left(1+e^{g(X_1)} \right) \right].
\label{eq:g_discrim_opt}
\end{equation}
Then, $\quad g^\phi(x) = \log\left( \frac{p_1^{u,\mu}(x)}{\nu(x)} \right)$ and
\begin{equation}
    \mathrm{KL}(p_1^{u,\mu} \| \nu) = \mathbb{E}_{X_1 \sim p_1^{u,\mu}}[g^\phi(X_1)] - \log \mathbb{E}_{X_1 \sim \nu}[e^{g^\phi(X_1)}].
\label{eq:kl_variational}
\end{equation}
\end{proposition}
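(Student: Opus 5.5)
The plan has two parts: identify the minimizer of the logistic loss \cref{eq:g_discrim_opt} pointwise, then verify the variational identity \cref{eq:kl_variational} by direct substitution. Write $p := p_1^{u,\mu}$ for brevity. Both densities are assumed positive, so the objective can be written as a single integral
\begin{equation*}
    L(g) = \int \Big[ p(x)\log\big(1+e^{-g(x)}\big) + \nu(x)\log\big(1+e^{g(x)}\big) \Big]\, dx .
\end{equation*}
The integrand depends on $g$ only through the scalar $g(x)$. Hence any minimizer over an unrestricted class must minimize the integrand for almost every $x$. Since the function class is assumed to contain the candidate minimizer, restricting to this class does not change the minimum.

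For fixed $x$, set $a=p(x)>0$ and $b=\nu(x)>0$, and consider $h(s)=a\log(1+e^{-s})+b\log(1+e^{s})$. Its derivative is $h'(s)=-a/(1+e^{s})+b e^{s}/(1+e^{s})$, which vanishes exactly at $e^{s}=a/b$. Its second derivative is $h''(s)=(a+b)e^{s}/(1+e^{s})^2>0$, so $h$ is strictly convex and $s^\star=\log(a/b)$ is its unique minimizer. This gives $g^\phi(x)=\log\big(p(x)/\nu(x)\big)$ almost everywhere. Equality everywhere follows by taking the continuous representative, or by reading the identity as holding $p$- and $\nu$-a.e., which is all that the expectations see.

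To justify that the pointwise minimizer is the global minimizer and that $L$ is finite there, compare with any other $g$. Since $h$ is minimized at $s^\star$ for each $x$, the integrand at $g^\phi$ is no larger than at $g$, and integrating gives $L(g^\phi)\le L(g)$. Finiteness of $L(g^\phi)$ needs a check:
\begin{equation*}
    p\log\Big(1+\tfrac{\nu}{p}\Big)+\nu\log\Big(1+\tfrac{p}{\nu}\Big)\le \nu + p ,
\end{equation*}
using $\log(1+y)\le y$ on each term. So $L(g^\phi)\le 2$, and the value is well-defined. Uniqueness a.e. then follows from strict convexity. Any minimizer differing from $g^\phi$ on a set of positive $(p+\nu)$-measure would have strictly larger integrand there, and hence a strictly larger loss.

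For \cref{eq:kl_variational}, substitute $g^\phi=\log(p/\nu)$ directly. First, $\mathbb{E}_{\nu}[e^{g^\phi}]=\int \nu \cdot (p/\nu)\,dx=\int p\,dx=1$, so the log term is zero. Second, $\mathbb{E}_{p}[g^\phi]=\int p\log(p/\nu)\,dx=\mathrm{KL}(p\|\nu)$, which is finite by assumption. The only subtle step is the integrability just discussed, namely that the pointwise argument yields a genuine minimizer with finite objective. The bound above handles it. Finally, one can remark that the right-hand side of \cref{eq:kl_variational} is the Donsker--Varadhan lower bound. The identity is therefore also consistent with the supremum representation of KL, attained at the log-density ratio.
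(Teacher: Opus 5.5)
Your proof is correct and follows essentially the same route as the paper. The paper also minimizes the logistic integrand pointwise in the scalar $z=g(x)$ using the first- and second-derivative test, and then obtains \cref{eq:kl_variational} by substituting $g^*=\log(p_1^{u,\mu}/\nu)$, so that $\log\mathbb{E}_\nu[e^{g^*}]=0$ and $\mathbb{E}_{p_1^{u,\mu}}[g^*]=\mathrm{KL}(p_1^{u,\mu}\|\nu)$.

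Your bound $L(g^\phi)\le 2$, the a.e.-uniqueness argument, and your handling of the restricted function class add rigor that the paper omits. The paper additionally proves the Donsker--Varadhan inequality $\Phi(g)\le\mathrm{KL}$ via the tilted measure $Q_g$. That inequality is not needed for this proposition, but the paper uses it later in \cref{thm:DRSB_solved_with_DRO}.
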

Proposition~(\ref{prop:g_guidance}) shows that the KL penalty in \cref{eq:drsb_obj} can be expressed in terms of the terminal cost $g^\phi$ using \cref{eq:kl_variational}. The terminal cost is obtained by solving (\ref{eq:g_discrim_opt}).
The discriminator $g^\phi$ distinguishes generated samples ($X_1 \sim p_1^{u,\mu}$) from target samples ($X_1 \sim \nu$).
As a terminal cost, $g^\phi$ guides the terminal distribution $p_1^{u,\mu}$ toward the target distribution $\nu$ through this representation of the KL penalty.
\Needspace{16\baselineskip}
\begin{theorem}[Variational formulation of DRSB]
\label{thm:DRSB_solved_with_DRO}
For each admissible pair $(u,\mu)$, apply the variational representation of the terminal KL divergence. Then Problem~\ref{prob:drsb} is equivalent to
\begin{equation}
\min_u\max_{\mu\in\mathfrak{M}_\epsilon(\hat{\mu})}\sup_g
\left\{\mathbb{E}_{X_0\sim\mu}[V(u,X_0;g)]
-\rho\log\mathbb{E}_{X_1\sim\nu}[e^{g(X_1)}]\right\},
\label{eq:drsb_variational}
\end{equation}
subject to \cref{eq:drsb_dyn,eq:drsb_ambiguity}, where
\begin{equation}
V(u,x;g):=\mathbb{E}_{X_{\cdot|0}\sim p^u_{\cdot|0}}\left[
\int_0^1\frac{1}{2}\|u_t^\theta(X_t)\|^2dt+\rho g(X_1)\,\middle|\,X_0=x\right].
\label{eq:drsb_cost}
\end{equation}
Here, $p^u_{\cdot|0}$ is the controlled path distribution conditioned on the initial state, and the supremum is over measurable $g$ for which the expectations are well defined and $0<\mathbb{E}_\nu[e^g]<\infty$.
The optimizing terminal cost depends on $(u,\mu)$ and cannot generally be fixed across the outer optimization.
\end{theorem}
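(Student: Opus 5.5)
The plan is to prove the equivalence pointwise in $(u,\mu)$: for every admissible pair, the inner objective of Problem~\ref{prob:drsb} equals the supremum over $g$ in \cref{eq:drsb_variational}. Once this holds, the outer $\min_u\max_\mu$ operations act on identical functions of $(u,\mu)$, so the two problems have the same value and the same optimizers. No exchange of $\min$, $\max$ and $\sup$ is needed, because $\sup_g$ stays innermost. The last sentence of the theorem, that the optimal $g$ depends on $(u,\mu)$, will then follow from the explicit maximizer.

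\textbf{Step 1 (Donsker--Varadhan).} For fixed $(u,\mu)$ I will establish
\[
\mathrm{KL}(p_1^{u,\mu}\|\nu)=\sup_g\Bigl\{\mathbb{E}_{p_1^{u,\mu}}[g]-\log\mathbb{E}_\nu[e^g]\Bigr\},
\]
with the supremum over measurable $g$ satisfying $0<\mathbb{E}_\nu[e^g]<\infty$ and $\mathbb{E}_{p_1^{u,\mu}}[g]$ well defined.

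\emph{Upper bound.} Given such a $g$, define the tilted measure $\nu_g:=e^g\nu/\mathbb{E}_\nu[e^g]$. A direct computation gives
\[
\mathrm{KL}(p_1^{u,\mu}\|\nu)-\bigl(\mathbb{E}_{p_1^{u,\mu}}[g]-\log\mathbb{E}_\nu[e^g]\bigr)=\mathrm{KL}(p_1^{u,\mu}\|\nu_g)\ge 0.
\]
This requires care when $\mathbb{E}_{p_1}[g]=-\infty$, where the bound is trivial, or when the KL is infinite.

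\emph{Attainment.} If the KL is finite, take $g^\star=\log(dp_1^{u,\mu}/d\nu)$. Then $\mathbb{E}_\nu[e^{g^\star}]=1$ and the bound holds with equality. This is exactly the identity \cref{eq:kl_variational} of Proposition~\ref{prop:g_guidance}, which I can cite.

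\emph{Infinite KL.} If the KL is infinite, I will use truncations $g_n=\min(g^\star,n)$, or indicators of sets charged by $p_1$ but null for $\nu$ in the singular case. These show the supremum is $+\infty$.

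\textbf{Step 2 (rewriting the objective).} Multiply the Step 1 identity by $\rho>0$ and add the control energy, which does not depend on $g$ and so passes inside the supremum. Condition on $X_0$ by disintegrating the path measure, $p^{u,\mu}=\int p^u_{\cdot|0}(\cdot\mid x)\,\mu(dx)$, and apply the tower property. This gives
\[
\mathbb{E}_{p^{u,\mu}}\Bigl[\int_0^1\tfrac12\|u_t\|^2dt\Bigr]+\rho\,\mathbb{E}_{p_1^{u,\mu}}[g]=\mathbb{E}_{X_0\sim\mu}[V(u,X_0;g)].
\]
Here I use that the controlled SDE from a fixed $x$ has a law independent of $\mu$, which is why $V$ depends only on $(u,x,g)$.

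\textbf{Main obstacle.} The delicate step is the integrability bookkeeping in Step 2. Linearity of expectation must not create an $\infty-\infty$ when summing the energy term and $\rho g$, and the tower property must hold for $g$ of arbitrary sign. I would handle this by first proving the identity for bounded $g$. I would then extend it via the positive and negative parts separately, using monotone convergence and the well-definedness assumption in the theorem. The assumption ensures that one of the two parts has finite expectation, and the finite-energy admissibility of $u$ controls the energy term. Finally, because $g^\star$ depends on $p_1^{u,\mu}$, the optimizer changes with $(u,\mu)$ unless $p_1^{u,\mu}$ happens to be constant in $(u,\mu)$. This justifies the final remark of the theorem.
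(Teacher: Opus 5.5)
Your proposal is correct and follows the paper's proof. Both apply the Donsker--Varadhan representation (Lemma~\ref{lem:variational_kl}) pointwise in $(u,\mu)$, condition on $X_0$ to obtain $\mathbb{E}_{X_0\sim\mu}[V(u,X_0;g)]$ without interchanging $\min$, $\max$ and $\sup$, and read off the $(u,\mu)$-dependence from the normalized maximizer $\log(p_1^{u,\mu}/\nu)$. Your treatment of the infinite-KL case and of the integrability bookkeeping goes beyond the paper, which tacitly relies on the finite-KL, positive-density hypotheses of Proposition~\ref{prop:g_guidance}.
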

For fixed $g$, the normalization term is constant in $(u,\mu)$, giving the subproblem
\begin{equation}
\min_u\max_{\mu\in\mathfrak{M}_\epsilon(\hat{\mu})}
\mathbb{E}_{X_0\sim\mu}[V(u,X_0;g)],\qquad
\text{s.t. \cref{eq:drsb_dyn,eq:drsb_ambiguity}}.
\label{eq:drsb_dro_form}
\end{equation}
We write $V(u,x)=V(u,x;g)$ when $g$ is held fixed.

From the SOC perspective, $V(u,x;g)$ in \cref{eq:drsb_cost} is the cost of a stochastic control problem with terminal cost $\rho g$. For fixed $(\mu,g)$, minimizing it gives a control adapted to that initial distribution and terminal cost~\citep{chen2021stochastic,liu2022deep}. A shift in the initial distribution can change the terminal marginal. DRSB instead minimizes the worst-case objective over the ambiguity set, retaining the optimization over $g$ in \cref{eq:drsb_variational}.

For fixed $g$, introducing a Lagrange multiplier $\lambda\geq0$ gives the dual of \cref{eq:drsb_dro_form}, with equality when strong duality holds:
\begin{equation}
    \min_u \min_{\lambda \geq 0} \lambda \epsilon + \max_{\mu} \left\{ \mathbb{E}_{X_0 \sim \mu}\left[ V(u,X_0) \right] - \lambda D(\mu, \hat{\mu}) \right\}, \quad \text{s.t. \quad (\ref{eq:drsb_dyn})}.
\label{eq:drsb_dro_dual}
\end{equation}

\subsection{Iterative Optimization Scheme for DRSB}

We use block updates for \cref{eq:drsb_variational} in the order $\mu^{(k)}\rightarrow g^{(k)}\rightarrow u^{(k)}$, holding the other two variables fixed at each step. First, with $(u^{(k-1)},g^{(k-1)})$ fixed, the adversarial initial distribution is updated through the DRO dual:
\begin{equation}
\begin{aligned}
\mu^{(k)},\lambda^{(k)}\leftarrow\min_{\lambda\geq0}\;\lambda\epsilon+\max_\mu
\bigl\{\mathbb{E}_{X_0\sim\mu}[V(u^{(k-1)},X_0;g^{(k-1)})]
-\lambda D(\mu,\hat{\mu})\bigr\}.
\end{aligned}
\label{eq:worst_case_dual}
\end{equation}
We simulate the dynamics with $u^{(k-1)}$ from $\mu^{(k)}$ and denote the resulting terminal marginal by $p_1^{(k)}=p_1^{u^{(k-1)},\mu^{(k)}}$. The terminal cost update is
\begin{equation}
    g^{(k)} = \arg\min_g \mathbb{E}_{X_1\sim p_1^{(k)}}\left[\log \left(1+e^{-g(X_1)}\right)\right] + \mathbb{E}_{X_1\sim \nu}  \left[ \log \left(1+e^{g(X_1)} \right) \right].
\label{eq:g_update}
\end{equation}
At the population optimum, this loss recovers the same log-density ratio as the KL variational objective. With $(\mu^{(k)},g^{(k)})$ fixed, the control subproblem is
\begin{equation}
u^{(k)}\leftarrow\arg\min_u\mathbb{E}_{X_0\sim\mu^{(k)}}[V(u,X_0;g^{(k)})],\qquad\text{s.t.}\quad (\text{\ref{eq:drsb_dyn}}).
\label{eq:dro_iterative}
\end{equation}
We use the adjoint matching objective~\citep{domingo2024adjoint}:
\begin{equation}
    u^{(k)} = \arg\min_u \mathbb{E}_{X_0 \sim \mu^{(k)}} \mathbb{E}_{t \sim \mathcal{U}[0,1]} \mathbb{E}_{X_t, X_1 \sim p^{\bar{u}}_{\cdot|0}}\left[ \| u^\theta(t, X_t) + \rho \nabla g^{(k)}(X_1) \|^2 \right],
\label{eq:AM_objective}
\end{equation}
where $p^{\bar{u}}_{\cdot|0}$ is the path measure induced by $\bar{u}=\texttt{stopgrad}(u^\theta)$, conditioned on the initial state. This objective regresses the control toward the negative terminal cost gradient. In practice, each block uses finitely many optimization steps. These updates neither exchange the minimax operators nor establish convergence of the full nested problem.

\noindent
\begin{minipage}[t]{0.48\linewidth}
\vspace{0pt}
The DRSB training procedure is summarized in \cref{alg:drsb_overview}.
Following the successive approximation method from robust control theory~\citep{bea1998successive}, we initialize $u^\theta$ with a pretrained standard SB model.
Steps 3--5 remain the same across all DRSB variants.
Only Step 2, the adversarial initial distribution update, depends on the choice of ambiguity set distance $D$.
The cost of Step 2 depends on this choice, while the total training cost also includes simulation and network updates.
\end{minipage}
\hfill
\begin{minipage}[t]{0.48\linewidth}
\vspace{0pt}
\begin{algorithm}[H]
\caption{Distributionally Robust SB}
\textbf{Require:} Drift $u^\theta$, terminal cost $g^\phi$, nominal distribution $\hat{\mu}$, target $\nu$.

\textbf{Initialize:} $u^\theta$ with a pretrained SB model, $g^\phi$, $\lambda$

\begin{algorithmic}[1]
\FOR{$k = 1, 2, \ldots, K$}
    \STATE $\mu^{(k)}, \lambda^{(k)} \leftarrow$ Update via (\ref{eq:worst_case_dual})
    \STATE Simulate (\ref{eq:drsb_dyn}) from $\mu^{(k)}, u^{(k-1)}$
    \STATE Update $g^{(k)}$ via (\ref{eq:g_update})
    \STATE Update $u^{(k)}$ via (\ref{eq:AM_objective})
\ENDFOR
\end{algorithmic}
\label{alg:drsb_overview}
\end{algorithm}
\end{minipage}

\subsection{Variants of DRSB}

Different choices of the distributional distance $D$ can lead to different DRSB variants.
We consider two variants based on the Wasserstein and Sinkhorn distances.
We use a fixed-terminal-cost SOC optimality condition to simplify the adversarial updates.
See Appendix~\ref{appendix:detail_drsb_vars} for details.

\paragraph{DRSB with Wasserstein Distance.} The following proposition shows the dual formulation of DRSB with Wasserstein distance.
\begin{proposition}[Dual form of Wasserstein DRSB]
The fixed-$g$ subproblem (\ref{eq:drsb_dro_form}) with the Wasserstein-2 ambiguity set $\mathfrak{M}^W_{\epsilon}(\hat{\mu}) = \{ \mu : W^2_2(\mu, \hat{\mu}) \le \epsilon \}$ admits strong duality~\citep{zhang2025short,gao2023distributionally}, and the corresponding dual problem~(\ref{eq:drsb_dro_dual}) specializes to
\begin{equation}
    \min_u \min_{\lambda \geq 0} \left\{ \lambda\varepsilon  -  \frac{1}{N}\sum_{i=1}^N \min_{X_0} \left\{ \lambda \| X_0-\hat{X}^i_0 \|^2 - V(u,X_0) \right\} \right\},  \quad
    \text{s.t. \quad (\ref{eq:drsb_dyn}).}
\label{eq:wasserstein_drsb}
\end{equation}
\end{proposition}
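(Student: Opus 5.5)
The plan is to fix $u$ (and $g$) and treat the inner problem $\max_{\mu\in\mathfrak{M}^W_\epsilon(\hat\mu)}\mathbb{E}_{X_0\sim\mu}[V(u,X_0)]$ as a standard Wasserstein DRO problem with loss $x\mapsto V(u,x)$. I would then invoke the strong duality theorem for optimal-transport ambiguity sets \citep{gao2023distributionally,zhang2025short}. With transport cost $c(x,y)=\|x-y\|^2$, this theorem states that for an upper semicontinuous loss $\ell$ with $\mathbb{E}_{\hat\mu}|\ell|<\infty$,
\[
\sup_{\mu:\,W_c(\mu,\hat\mu)\le\epsilon}\mathbb{E}_\mu[\ell]
=\min_{\lambda\ge0}\Bigl\{\lambda\epsilon+\mathbb{E}_{\hat X\sim\hat\mu}\bigl[\sup_{x}\{\ell(x)-\lambda\|x-\hat X\|^2\}\bigr]\Bigr\}.
\]
Here $W_c$ coincides with $W_2^2$. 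Because $\hat\mu$ is empirical, the outer expectation is the average over the $N$ atoms. Rewriting $\sup_x\{V-\lambda\|x-\hat X_0^i\|^2\}$ as $-\min_{X_0}\{\lambda\|X_0-\hat X_0^i\|^2-V(u,X_0)\}$ gives \cref{eq:wasserstein_drsb} after restoring $\min_u$ outside. Finally, I would note that this is exactly \cref{eq:drsb_dro_dual} with $D=W_2^2$. The inner $\max_\mu\{\mathbb{E}_\mu V-\lambda W_2^2(\mu,\hat\mu)\}$ reduces, via couplings with first marginal $\hat\mu$, to a pointwise supremum for each atom. This is the interchange of sup and integral, which is standard for normal integrands and trivial for finite $\hat\mu$.

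The substantive work is verifying the hypotheses on $V(u,\cdot)$.

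\emph{Measurability and upper semicontinuity.} Under Lipschitz, bounded-growth conditions on $u^\theta$, the SDE \cref{eq:drsb_dyn} has a strong solution that depends continuously on $X_0$ in $L^2$. Then $x\mapsto V(u,x)=\mathbb{E}[\int_0^1\tfrac12\|u_t(X_t)\|^2dt+\rho g(X_1)\mid X_0=x]$ is continuous whenever $g$ is continuous with at most quadratic growth, by dominated convergence.

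\emph{Integrability.} Finiteness of $V$ at the $N$ atoms follows from moment bounds on $X_t$.

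I would also record the growth condition. If $V(u,x)\le C(1+\|x\|^2)$, the dual value is finite for $\lambda$ large enough. If instead the growth exceeds quadratic, both sides equal $+\infty$, and the equality still holds in the extended sense, as covered by \citet{gao2023distributionally}.

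The main obstacle is this regularity and growth verification, since $g$ is only assumed measurable in Theorem~\ref{thm:DRSB_solved_with_DRO}. I would first try to sidestep it by using the version of the duality in \citet{zhang2025short}, which needs only measurability of the loss and Polish state space; then only integrability under $\hat\mu$ (finitely many points) must be checked. That check is immediate whenever $V(u,\hat X_0^i)<\infty$.
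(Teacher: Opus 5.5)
Your proposal is correct and takes essentially the paper's route: the paper forms the Lagrangian of the $W_2^2$ constraint, writes the inner maximization over couplings with the $\hat{\mu}$-marginal fixed, and applies the interchangeability principle \citep{zhang2025short} to get a per-atom supremum and then the finite empirical sum, which is the reduction you sketch. The difference is presentational: you cite the finished duality theorem and explicitly check measurability, integrability and growth of $V(u,\cdot)$, whereas the paper leaves these to the cited regularity assumptions.
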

Since the nominal distribution is a discrete empirical distribution, the inner minimization must be solved for every sample.
Our implementation represents the adversary by one perturbed sample per nominal sample~\citep{gao2023distributionally}.

\cref{eq:wasserstein_drsb} is a nested optimization problem over $\lambda$ and the per-sample initial state $X_0$.
In practice, we use the value of $\lambda$ from the previous iteration when optimizing the worst-case samples, just as we use the controller from the previous iteration in \cref{eq:worst_case_dual}.
We optimize $X_0$ using gradient descent.
For a smooth value function and an optimal controller $u^*$ of the fixed-$g$ SOC subproblem, the optimality condition is~\citep{kappen2005path}:
\begin{equation}
    u^*_0(x) = - \nabla V(u^*, x).
\label{eq:optimal_control}
\end{equation}
Replacing $-\nabla V(u,x)$ by $u_0(x)$ uses one network evaluation. This is exact at the fixed-$g$ SOC optimum and is a surrogate for a controller after finite optimization. For outer iteration $k$, hold $u^{(k-1)}$ fixed and let $j$ index the inner updates:
\begin{equation}
    X^{i,(j+1)}_0 \leftarrow X^{i,(j)}_0 - \eta \left[u_0^{(k-1)}(X^{i,(j)}_0) + 2\lambda^{(j)}(X^{i,(j)}_0 - \hat{X}_0^i) \right], \quad \eta\text{ : step size}.
\label{eq:worst_case_gd}
\end{equation}
The dual update is
\begin{equation}
    \lambda^{(j+1)} \leftarrow \lambda^{(j)} - \zeta \left( \varepsilon - \frac{1}{N}\sum_{i=1}^N \|X^{i,(j+1)}_0 - \hat{X}^i_0\|^2 \right), \quad \zeta\text{ : step size},
\label{eq:lambda_update_rule}
\end{equation}
which is valid as a dual update only while $\lambda^{(j+1)}\geq0$.

\paragraph{DRSB with Sinkhorn Distance.}
The Sinkhorn distance~\citep{cuturi2013sinkhorn} between $\mu$ and $\hat{\mu}$ is
\begin{equation}
    \mathcal{W}_{\kappa}(\hat{\mu}, \mu) :=  \min_{\gamma \in \Gamma(\hat{\mu}, \mu)}  \left\{ \mathbb{E}_{(\hat{X}_0, X_0)\sim \gamma}\left[\|\hat{X}_0 - X_0\|^{2}  +  \kappa \log \left( \frac{d\gamma(\hat{X}_0, X_0)}{d\hat{\mu}(\hat{X}_0) dX_0} \right)  \right] \right\},
\end{equation}
where $\Gamma(\hat{\mu}, \mu)$ is the set of joint distributions between $\hat{\mu}$ and $\mu$, and $\kappa > 0$ is the entropy regularization parameter.
\begin{proposition}[Dual form of Sinkhorn DRSB]
The fixed-$g$ subproblem (\ref{eq:drsb_dro_form}) with the Sinkhorn ambiguity set $\mathfrak{M}^S_{\epsilon}(\hat{\mu}) = \{ \mu : \mathcal{W}_\kappa(\hat{\mu}, \mu) \le \epsilon \}$ admits strong duality~\citep{wang2025sinkhorn}, and the corresponding dual problem~(\ref{eq:drsb_dro_dual}) specializes to
\begin{equation}
    \min_u \inf_{\lambda > 0} \lambda \varepsilon + \frac{1}{N}\sum_{i=1}^N \lambda \kappa \log \int e^{ \left( V(u,X_0) - \lambda \|X_0 - \hat{X}^i_0\|^2 \right)/\lambda \kappa} dX_0, \quad \text{s.t. \quad (\ref{eq:drsb_dyn}),}
\label{eq:sinkhorn_drsb}
\end{equation}
The $\lambda=0$ boundary is interpreted through a limit. For $\lambda>0$ and a finite normalizing integral, the worst-case conditional distribution $\gamma^*(X_0 | \hat{X}^i_0)$ is obtained as
\begin{equation}
    \gamma^*(X_0 | \hat{X}^i_0) \propto e^{\left(V(u,X_0) - \lambda \|X_0 - \hat{X}^i_0\|^2\right)/\lambda\kappa}.
\label{eq:sink_worst_case}
\end{equation}
\end{proposition}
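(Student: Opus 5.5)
The plan is to rely on the Sinkhorn DRO strong duality of \citep{wang2025sinkhorn} and specialize it to the fixed-$g$ subproblem (\ref{eq:drsb_dro_form}), where $V(u,\cdot)$ acts as the loss function. The proof has three parts: rewrite the primal as an optimization over couplings, dualize the single budget constraint, and solve the resulting per-sample inner problem in closed form through the Gibbs variational principle.

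\textbf{Step 1: lift to couplings.} Since $\mu$ enters only through its coupling with $\hat{\mu}$, we rewrite $\max_{\mu\in\mathfrak{M}^S_\epsilon(\hat\mu)}\mathbb{E}_\mu[V]$ as a maximization over joint laws $\gamma$ whose first marginal is $\hat{\mu}$, subject to
\[
\mathbb{E}_\gamma\bigl[\|\hat X_0-X_0\|^2+\kappa\log\tfrac{d\gamma}{d\hat\mu\,dX_0}\bigr]\le\epsilon .
\]
The minimum over couplings in the definition of $\mathcal{W}_\kappa$ merges with the outer maximization. Because $\hat{\mu}$ is empirical, $\gamma$ is determined by $N$ conditional densities $\gamma(\cdot\mid\hat X_0^i)$.

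\textbf{Step 2: Lagrangian duality in $\lambda$.} We introduce $\lambda\ge0$ for the budget constraint and invoke the strong duality theorem of \citep{wang2025sinkhorn}. Its hypotheses are that $V(u,\cdot)$ is measurable, that $\epsilon$ exceeds the minimal attainable entropic cost (a Slater condition), and that $\int e^{(V-\lambda\|x-\hat X_0^i\|^2)/\lambda\kappa}dx<\infty$ for some $\lambda>0$. These yield
\[
\max_\gamma\min_\lambda=\inf_{\lambda>0}\Bigl\{\lambda\epsilon+\tfrac1N\textstyle\sum_i\sup_{\gamma_i}\bigl(\mathbb{E}_{\gamma_i}[V-\lambda\|X_0-\hat X_0^i\|^2]-\lambda\kappa\,\mathrm{KL}(\gamma_i\|\mathrm{Leb})\bigr)\Bigr\}.
\]
The case $\lambda=0$ is handled as a limit: when $\lambda\to0^+$ the log-integral tends to $\operatorname{ess\,sup}V$ or diverges, which matches the limiting interpretation in the statement.

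\textbf{Step 3: Gibbs principle per sample.} For each $i$, the supremum of $\mathbb{E}_{\gamma_i}[f]-\tau\int\gamma_i\log\gamma_i$ with $\tau=\lambda\kappa$ equals $\tau\log\int e^{f/\tau}dx$. This follows from Donsker--Varadhan applied to the normalized Gibbs density. Substituting gives (\ref{eq:sinkhorn_drsb}). Equality holds exactly when $\gamma_i$ is proportional to $e^{f/\tau}$, which gives the worst-case conditional (\ref{eq:sink_worst_case}); this requires $\lambda>0$ and a finite normalizer. Taking $\min_u$ outside is trivial, because duality is applied for each fixed $u$.

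The main obstacle is Step 2, namely verifying the hypotheses of the cited strong duality result for $V(u,\cdot)$. The growth condition needs $V$ to grow at most quadratically, and this is where I would use regularity of $u^\theta$ and $g$, such as Lipschitz continuity or bounded growth. I would state these as standing assumptions rather than prove them, in the same way as the Wasserstein proposition.
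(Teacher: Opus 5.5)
Your proposal is correct and follows the same route as the paper's appendix derivation. Both lift the inner maximization to conditional couplings $\gamma(\cdot\mid\hat X_0^i)$ with a multiplier $\lambda$ on the entropic budget, swap $\max_\gamma$ and $\min_\lambda$ using the cited Sinkhorn DRO strong duality, and solve each per-sample entropy-regularized problem in closed form to get the log-partition dual and the Gibbs worst-case conditional. The only difference is detail: you justify the closed form via the Gibbs/Donsker--Varadhan principle and spell out the feasibility and light-tail integrability hypotheses, which the paper leaves implicit under ``the assumptions for strong duality.''
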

In contrast to the Wasserstein case, the worst-case distribution in Sinkhorn DRSB is supported on all of $\mathbb{R}^d$, even when the nominal distribution $\hat{\mu}$ is discrete.
We approximate this distribution by linearizing the fixed-$g$ cost.
\begin{corollary}[Gaussian worst-case distribution]
\label{corol:gaussian_worst}
Fix $(u,g)$ and assume the SOC optimality condition in \cref{eq:optimal_control} holds. Suppose $\varepsilon+\frac{\kappa d}{2}\log(\pi\kappa)>0$. Under the first-order approximation $V(X_0) \approx V(\hat{X}_0^i) \!+\! \nabla V(\hat{X}_0^i)\!^\top\!(X_0 \!-\! \hat{X}_0^i)$, where $V(x)\!=\!V(u,x;g)$, the linearized inner dual in \cref{eq:sinkhorn_drsb} is
\begin{equation}
    \inf_{\lambda > 0} \; \lambda \left( \varepsilon + \frac{\kappa d}{2} \log \pi \kappa \right) + \frac{1}{N}\sum_{i=1}^N \left[ V(\hat{X}_0^i ) + \frac{\|u^\theta_0(\hat{X}_0^i) \|^2}{4 \lambda} \right],
\label{eq:sink_drsb_approx}
\end{equation}
where the worst-case conditional distribution is
\begin{equation}
    \gamma^*(X_0 | \hat{X}^i_0) = \mathcal{N} \left( \hat{X}^i_0 - \frac{u^\theta_0(\hat{X}_0^i)}{2\lambda} \,, \frac{\kappa}{2} I \right),
\label{eq:sink_gaussian_worst}
\end{equation}
and the optimal dual variable admits the closed form:
\begin{equation}
    \lambda^* = \sqrt{\left( \varepsilon + \frac{\kappa d}{2} \log \pi \kappa \right)^{-1} \frac{1}{N}\sum_{i=1}^N \left[ \frac{\|u^\theta_0(\hat{X}_0^i) \|^2}{4} \right]}.
\label{eq:sdrsb_opt_lambda}
\end{equation}
\end{corollary}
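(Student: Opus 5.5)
The plan is to substitute the linearized cost into the Sinkhorn dual \cref{eq:sinkhorn_drsb}, evaluate each per-sample log-partition integral in closed form by completing the square, and then optimize the resulting one-dimensional function of $\lambda$.

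\textbf{Step 1: linearize and evaluate the integral.} Write $a_i:=\nabla V(\hat{X}_0^i)$ and $y:=X_0-\hat{X}_0^i$. The exponent of the $i$-th integrand becomes
\[
\frac{V(\hat{X}_0^i)+a_i^\top y-\lambda\|y\|^2}{\lambda\kappa}.
\]
Completing the square gives
\[
-\lambda\|y\|^2+a_i^\top y=-\lambda\Bigl\|y-\frac{a_i}{2\lambda}\Bigr\|^2+\frac{\|a_i\|^2}{4\lambda}.
\]
After dividing by $\lambda\kappa$, the quadratic part is $-\|y-a_i/(2\lambda)\|^2/\kappa$. This is a Gaussian kernel with covariance $\frac{\kappa}{2}I$, and its integral is $(\pi\kappa)^{d/2}$.

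Normalizing the integrand then shows the worst-case conditional is $\mathcal{N}\bigl(\hat{X}_0^i+a_i/(2\lambda),\tfrac{\kappa}{2}I\bigr)$. Substituting the SOC optimality condition \cref{eq:optimal_control}, $a_i=-u_0(\hat{X}_0^i)$, gives exactly \cref{eq:sink_gaussian_worst}. Here I identify the controller in the statement with the fixed-$g$ optimum, as the hypothesis permits.

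\textbf{Step 2: assemble the linearized dual.} Taking $\lambda\kappa\log$ of the integral gives, per sample,
\[
V(\hat{X}_0^i)+\frac{\|a_i\|^2}{4\lambda}+\lambda\,\frac{\kappa d}{2}\log(\pi\kappa).
\]
Averaging over $i$, adding $\lambda\varepsilon$, and using $\|a_i\|=\|u_0^\theta(\hat{X}_0^i)\|$ yields \cref{eq:sink_drsb_approx}. The integral is finite for every $\lambda>0$, so the finiteness caveat in the Sinkhorn proposition is automatically satisfied.

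\textbf{Step 3: minimize over $\lambda$.} The objective has the form $f(\lambda)=\lambda A+B/\lambda+C$, with
\[
A=\varepsilon+\tfrac{\kappa d}{2}\log(\pi\kappa)>0 \ \text{(by hypothesis)},\qquad B=\frac{1}{N}\sum_i\frac{\|u_0^\theta(\hat{X}_0^i)\|^2}{4}\ge0.
\]
If $B>0$, then $f$ is strictly convex on $(0,\infty)$ and tends to $+\infty$ at both ends. Setting $f'(\lambda)=A-B/\lambda^2=0$ gives $\lambda^*=\sqrt{B/A}$, which is \cref{eq:sdrsb_opt_lambda}. If $B=0$, the infimum is approached as $\lambda\to0$, consistent with the paper's limit interpretation of the boundary, and the formula returns $\lambda^*=0$.

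\textbf{Main obstacle.} The computations are routine; the delicate point is bookkeeping of constants. I need to confirm the covariance $\kappa/2$ and the constant $\frac{\kappa d}{2}\log(\pi\kappa)$ against the exact form of the entropic term in $\mathcal{W}_\kappa$ (Lebesgue reference measure, no extra normalization). I also need to make clear that the positivity hypothesis on $A$ is exactly what makes the infimum finite and attained.
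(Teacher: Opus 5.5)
Your proposal is correct and follows essentially the paper's route: substitute the linearization into the Sinkhorn dual, complete the square to get the Gaussian conditional with covariance $\frac{\kappa}{2}I$ and normalizer $(\pi\kappa)^{d/2}$, minimize $\lambda\bigl(\varepsilon+\frac{\kappa d}{2}\log\pi\kappa\bigr)+B/\lambda$, and substitute $\nabla V=-u_0$. The only cosmetic difference is that the paper first completes the square for the second-order approximation, with precision matrix $\frac{2}{\kappa}I-\nabla_{xx}V/(\lambda\kappa)$, and then drops the Hessian. Your constants agree with the paper's, since the entropic term in $\mathcal{W}_\kappa$ is defined relative to $d\hat{\mu}\,dX_0$, i.e.\ Lebesgue measure in $X_0$.
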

Corollary~\ref{corol:gaussian_worst} gives closed-form sampling for the linearized inner problem.

\begingroup
\captionsetup{font=small,skip=4pt,singlelinecheck=false}
\setlength{\intextsep}{7pt plus 2pt minus 2pt}
\setlength{\columnsep}{11pt}
\newcommand{\experimenttable}{%
  \small
  \setlength{\tabcolsep}{3pt}%
  \renewcommand{\arraystretch}{1.05}%
}
\newcommand{\meanstd}[2]{\mbox{#1{\scriptsize$\pm$#2}}}
\newcommand{\finishwrappedpanel}{%
  \par
  \ifnum\value{WF@wrappedlines}>2\relax
    \vspace{\dimexpr\value{WF@wrappedlines}\baselineskip-2\baselineskip\relax}%
  \fi
  \WFclear
}
\FloatBarrier
\section{Experiments}
\suppressfloats[t]

\subsection{Robustness against noisy and shifted distributions on 2D transport tasks}

We demonstrate the robustness of our DRSB algorithm on the 2D Gaussian-to-Gaussian and GMM-to-GMM transport tasks.
We use the sliced Wasserstein distance (SWD) between the terminal and target distributions as our metric.
We use a standard SB model trained with the Diffusion Schr\"{o}dinger Bridge Matching (DSBM) algorithm~\citep{shi2023diffusion} as the baseline.
We test each trained model against three perturbation types, Gaussian noise (Noisy), directional shift (Shifted), and the DRSB worst-case distribution (Worst), to the initial distribution at test time.

\begin{wraptable}{r}{0.56\textwidth}
\centering
\caption{SWD ($\downarrow$) under input perturbations (mean $\pm$ std, 10 seeds). Org denotes the original input; the other conditions are defined in the text.}
\experimenttable
\begin{tabular}{@{}lcccc@{}}
\toprule
& \multicolumn{2}{c}{\textit{Gaussian-to-Gaussian}} & \multicolumn{2}{c}{\textit{GMM-to-GMM}} \\
\cmidrule(lr){2-3} \cmidrule(lr){4-5}
Input & DSBM & DRSB & DSBM & DRSB \\
\midrule
Org & \meanstd{\textbf{0.038}}{0.007} & \meanstd{0.123}{0.012} & \meanstd{\textbf{0.902}}{0.242} & \meanstd{1.165}{0.138} \\
Noisy & \meanstd{0.684}{0.013} & \meanstd{\textbf{0.638}}{0.016} & \meanstd{2.407}{0.243} & \meanstd{\textbf{2.212}}{0.413} \\
Shifted & \meanstd{0.666}{0.011} & \meanstd{\textbf{0.501}}{0.014} & \meanstd{5.917}{0.127} & \meanstd{\textbf{5.840}}{0.095} \\
Worst & \meanstd{0.951}{0.013} & \meanstd{\textbf{0.710}}{0.013} & \meanstd{1.988}{0.100} & \meanstd{\textbf{1.478}}{0.100} \\
\bottomrule
\end{tabular}
\label{tab:2d_robustness}
\end{wraptable}

The results are summarized in \cref{tab:2d_robustness}.
On the original distribution, DSBM achieves lower SWD in both tasks.
This is because DRSB optimizes for worst-case performance over the ambiguity set, which allocates transport capacity across a wider region of the input space at the cost of nominal accuracy.
Under all three perturbation types, DRSB outperforms DSBM on both Gaussian and GMM tasks.
DSBM degrades under perturbations because the learned transport map is fitted to the training distribution, and inputs that deviate from the nominal support produce suboptimal trajectories.
The gap between DRSB and DSBM is largest on the worst-case distribution trained on DRSB.

\WFclear
\FloatBarrier
\Needspace{0.20\textheight}

\begingroup
\setlength{\intextsep}{0pt}
\captionsetup{justification=raggedright}
\begin{figure*}[!htbp]
\centering
\begin{minipage}[t]{0.54\textwidth}
\vspace{0pt}
\captionsetup{type=table}
\caption{SWD ($\downarrow$), mean $\pm$ std over 10 seeds. W-wst and S-wst are the worst-case inputs of DRSB-W and DRSB-S.}
\label{tab:wdrsb_vs_sdrsb}
{\experimenttable
\setlength{\tabcolsep}{1.5pt}
\begin{tabular}{@{}lcccc@{}}
\toprule
& \multicolumn{2}{c}{\textit{Gaussian-to-Gaussian}} & \multicolumn{2}{c}{\textit{GMM-to-GMM}} \\
\cmidrule(lr){2-3} \cmidrule(lr){4-5}
Input & DRSB-W & DRSB-S & DRSB-W & DRSB-S \\
\midrule
Org & \meanstd{0.144}{0.007} & \meanstd{\textbf{0.123}}{0.012} & \meanstd{\textbf{0.885}}{0.226} & \meanstd{1.165}{0.138} \\
Noisy & \meanstd{0.678}{0.013} & \meanstd{\textbf{0.638}}{0.016} & \meanstd{2.315}{0.236} & \meanstd{\textbf{2.212}}{0.413} \\
Shifted & \meanstd{0.517}{0.014} & \meanstd{\textbf{0.501}}{0.014} & \meanstd{5.967}{0.153} & \meanstd{\textbf{5.840}}{0.095} \\
W-wst & \meanstd{0.972}{0.007} & \meanstd{\textbf{0.881}}{0.018} & \meanstd{\textbf{1.503}}{0.052} & \meanstd{1.791}{0.102} \\
S-wst & \meanstd{0.812}{0.012} & \meanstd{\textbf{0.710}}{0.013} & \meanstd{1.946}{0.100} & \meanstd{\textbf{1.478}}{0.100} \\
\bottomrule
\end{tabular}
\par}
\end{minipage}\hfill
\begin{minipage}[t]{0.42\textwidth}
\vspace{0pt}
\centering
\includegraphics[width=0.48\linewidth]{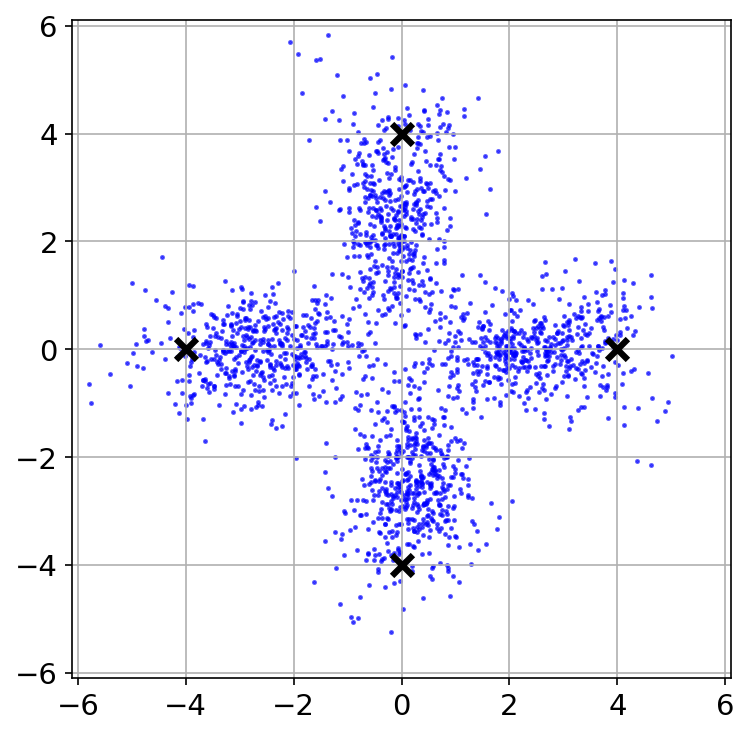}\hfill
\includegraphics[width=0.48\linewidth]{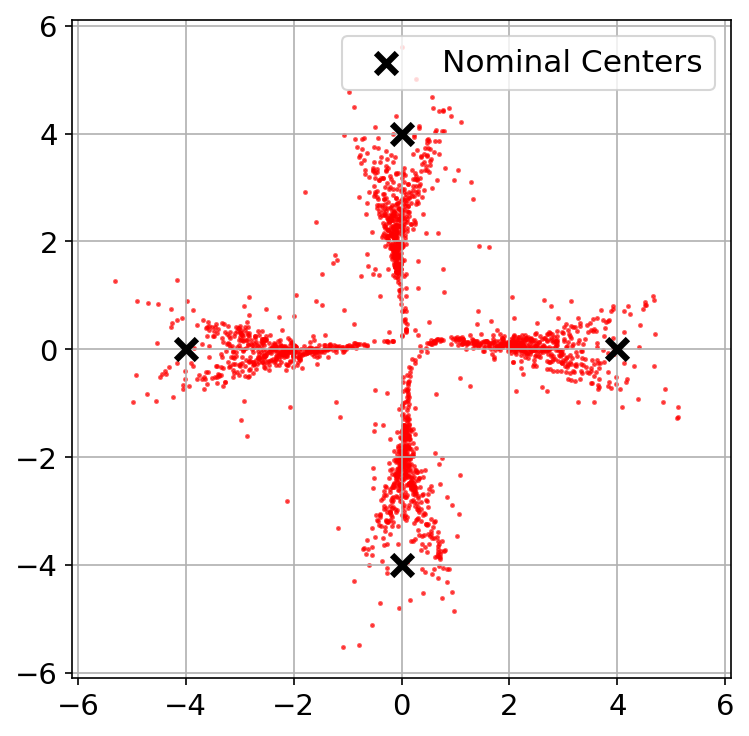}
\captionsetup{type=figure,labelfont={color=black},textfont={color=black}}
\caption{GMM adversarial samples from DRSB-S (left) and DRSB-W (right). Black crosses mark nominal centers.}
\label{fig:worst_case_comparison}
\end{minipage}
\end{figure*}

\subsection{Comparison between Wasserstein and Sinkhorn DRSB}
 
We compare the two DRSB variants across perturbation types (\cref{tab:wdrsb_vs_sdrsb,fig:worst_case_comparison}).
DRSB-S achieves lower mean SWD in eight of the ten reported conditions. These results indicate greater robustness of DRSB-S under most of the tested perturbation conditions.
\Cref{fig:worst_case_comparison} shows the computed worst case samples for the GMM task. DRSB-S samples are more dispersed due to the entropy regularization, whereas DRSB-W samples are more concentrated. One possible interpretation is that the broader spread exposes the DRSB-S controller to a wider range of inputs during training, which may help explain its performance under the tested perturbations.\par

\endgroup

\FloatBarrier
\subsection{Noise Augmentation and Generalization to Unseen Noise Levels}
\label{sec:rebuttal_noise}

We compare DRSB-S with a noise-augmented DSBM baseline on GMM-to-GMM transport to determine whether DRSB provides benefits beyond noise augmentation. Each augmentation baseline uses Gaussian noise at $\delta_{\mathrm{train}}=1$ or $2$, with the number of perturbed samples equal to 20\% of the clean dataset size. This makes the number of training steps involving perturbed samples comparable to that in the additional DRSB stage. We evaluate each model at test-time noise levels $\delta\in\{1,2,3,4\}$.

\begin{wraptable}{r}{0.53\textwidth}
\centering
\captionsetup{labelfont={color=black},textfont={color=black}}
\caption{GMM-to-GMM SWD ($\downarrow$) with noise augmentation (mean $\pm$ std, 10 seeds). Test levels $\delta=3,4$ are unseen by both augmentation baselines.}
\label{tab:rebuttal_noise}
\experimenttable
\begin{tabular}{@{}lcccc@{}}
\toprule
& DSBM & \multicolumn{2}{c}{DSBM + Aug} & DRSB-S \\
\cmidrule(lr){3-4}
$\delta$ & & $\delta_{\mathrm{train}}=1$ & $\delta_{\mathrm{train}}=2$ & \\
\midrule
Org & \meanstd{0.902}{0.242} & \meanstd{\textbf{0.850}}{0.178} & \meanstd{0.907}{0.153} & \meanstd{1.165}{0.138} \\
$1.0$ & \meanstd{1.287}{0.112} & \meanstd{\textbf{1.042}}{0.246} & \meanstd{1.119}{0.196} & \meanstd{1.150}{0.206} \\
$2.0$ & \meanstd{1.842}{0.086} & \meanstd{1.881}{0.145} & \meanstd{1.800}{0.153} & \meanstd{\textbf{1.636}}{0.157} \\
$3.0$ & \meanstd{2.175}{0.118} & \meanstd{2.298}{0.146} & \meanstd{2.220}{0.165} & \meanstd{\textbf{1.953}}{0.095} \\
$4.0$ & \meanstd{2.407}{0.243} & \meanstd{2.425}{0.129} & \meanstd{2.308}{0.138} & \meanstd{\textbf{2.212}}{0.413} \\
\bottomrule
\end{tabular}
\end{wraptable}

\Cref{tab:rebuttal_noise} shows that augmentation is effective at the noise level used in training, but its benefit does not transfer consistently to unseen levels. Augmentation at \mbox{$\delta_{\mathrm{train}}=1$} gives the lowest mean SWD at $\delta=1$ (1.042), yet performs worse than unaugmented DSBM at every tested $\delta\!\geq\!2$. Augmentation at \mbox{$\delta_{\mathrm{train}}=2$} improves performance at the matched level and at some other levels, but is worse than DSBM at the unseen level $\delta=3$.
DRSB-S achieves lower mean SWD than both augmentation baselines at every tested $\delta\!\geq\!2$. DRSB trains against an adaptive adversarial distribution within an ambiguity set, without specifying a test noise level. The results support broader generalization across the tested noise magnitudes than fixed-level augmentation. This robustness comes at the cost of higher Org input SWD (1.165, compared with 0.850 and 0.907 for augmentation).

\par\WFclear

\FloatBarrier
\Needspace{0.20\textheight}
\begin{minipage}{\textwidth}
\subsection{Image-to-Image Translation under Noise}

\begingroup
\setlength{\intextsep}{0pt}
\captionsetup{justification=raggedright}
\noindent\begin{minipage}[t]{0.55\textwidth}
\vspace{0pt}
\centering
\captionsetup{type=table}
\caption{FFHQ FID ($\downarrow$) under Gaussian noise $\delta$ (mean $\pm$ std, 3 seeds).}
\label{tab:ffhq_fid}
{\experimenttable
\setlength{\tabcolsep}{1.5pt}
\begin{tabular}{@{}lcccc@{}}
\toprule
& \multicolumn{2}{c}{\textit{Woman $\rightarrow$ Man}} & \multicolumn{2}{c}{\textit{Man $\rightarrow$ Woman}} \\
\cmidrule(lr){2-3} \cmidrule(lr){4-5}
Noise level & DSBM & DRSB & DSBM & DRSB \\
\midrule
Org & \meanstd{\textbf{25.55}}{0.50} & \meanstd{26.52}{0.50} & \meanstd{\textbf{29.32}}{0.44} & \meanstd{29.57}{0.48} \\
$\delta=0.25$ & \meanstd{26.59}{0.48} & \meanstd{\textbf{25.82}}{0.49} & \meanstd{30.02}{0.49} & \meanstd{\textbf{29.69}}{0.42} \\
$\delta=0.5$ & \meanstd{26.15}{0.49} & \meanstd{\textbf{25.98}}{0.49} & \meanstd{30.46}{0.43} & \meanstd{\textbf{29.70}}{0.51} \\
$\delta=1.0$ & \meanstd{26.81}{0.52} & \meanstd{\textbf{25.94}}{0.47} & \meanstd{29.63}{0.50} & \meanstd{\textbf{29.48}}{0.51} \\
\bottomrule
\end{tabular}
\par}
\vspace{5pt}
\includegraphics[width=0.32\linewidth]{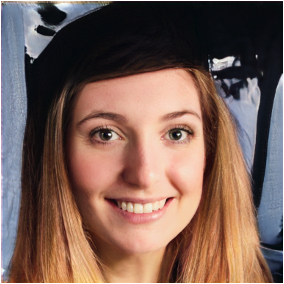}\hfill
\includegraphics[width=0.32\linewidth]{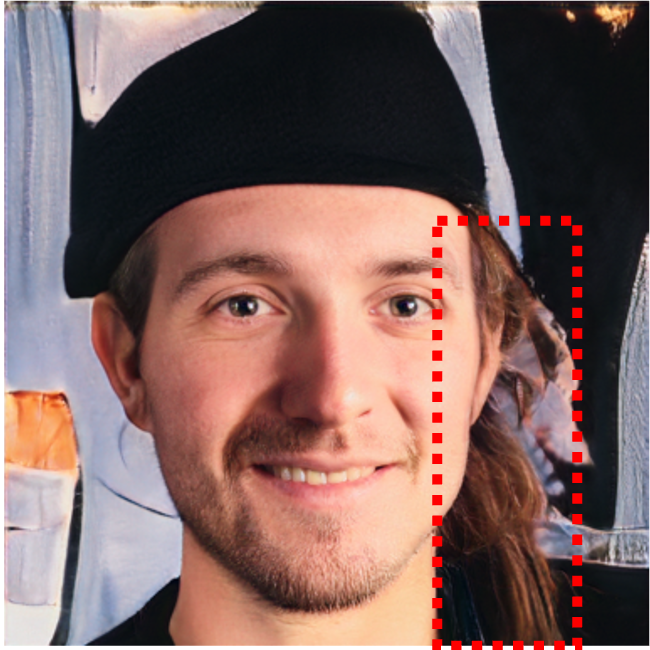}\hfill
\includegraphics[width=0.32\linewidth]{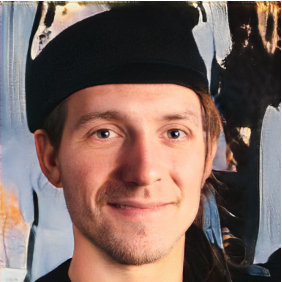}
\captionsetup{type=figure}
\caption{Woman-to-Man translation. Left to right: \mbox{perturbed input}, DSBM, and DRSB.}
\label{fig:ffhq_qualitative}
\end{minipage}\hfill
\begin{minipage}[t]{0.42\textwidth}
\vspace{0pt}

\leavevmode
We apply DRSB to image-to-image translation on the FFHQ dataset~\citep{karras2019style}, which contains images at a resolution of 1024$\times$1024 (CC BY-NC-SA 4.0).
Following the experimental settings in \cite{theodoropoulos2024feedback} and \cite{gushchin2024light}, we transport images in the 512-dimensional latent space of the ALAE autoencoder~\citep{pidhorskyi2020adversarial}. 
We use FID to evaluate generation quality after decoding the latent vector.
The results are shown in \cref{tab:ffhq_fid}.
DRSB outperforms DSBM across all $L_2$-norm noise levels. DSBM achieves a lower FID score than DRSB on nominal inputs.
\cref{fig:ffhq_qualitative} shows a qualitative example. Even when the effect of the added latent-space noise is visually imperceptible in the decoded input image, \mbox{DSBM generates visible artifacts}, whereas DRSB produces a clean translation.

\end{minipage}
\endgroup
\end{minipage}

\FloatBarrier
\Needspace{4\baselineskip}
\subsection{Additional Experimental Analysis}

\textbf{Computation cost.} On a single NVIDIA RTX 4090, the additional DRSB stage takes 404\,s on Gaussian transport and 1.4\,h on FFHQ, after DSBM pretraining of 754\,s and 8.5\,h. Total training costs are therefore approximately $1.54\times$ and $1.16\times$ DSBM. Network gradient updates dominate runtime, accounting for about 80\% on Gaussian transport and over 90\% on FFHQ. Worst-case optimization accounts for less than 1.5\% in the measured runs. DRSB-W uses up to 90 inner gradient steps, whereas DRSB-S samples a Gaussian adversary using the closed-form dual variable of the linearized problem. Appendix~\ref{sec:rebuttal_compute} reports the training configurations and stage-wise costs.

\textbf{Empirical convergence.} We track the variational objective and update diagnostics over 50 epochs on 2D Gaussian transport with $\rho=10$ and $\epsilon=12$. The objective stabilizes near 312, with a final value of 311.37. At all reported epochs $k\geq25$, the normalized ambiguity-constraint residual is below 0.005. The final dual variable is 3.483, close to the value 3.496 predicted by the active-constraint SOC relation. The terminal critic mean and mean squared control magnitude also stabilize, ending at 1.550 and 24.30. A separate 120-iteration run at constant learning rate $10^{-7}$ also settles near 312, supporting stabilization beyond the decaying learning-rate schedule. These observations do not establish stationarity of all three blocks or convergence to a saddle point (Appendix~\ref{sec:rebuttal_convergence}).

\textbf{SOC optimality condition.} The adversarial updates use the relation $\nabla_{X_0}V(u,X_0;g)=-u_0(X_0)$ from \cref{eq:optimal_control}. To assess its accuracy for the trained controller, we compute the cost gradient by discrete adjoint recursion and compare it with $-u^\theta(0,X_0)$ on the same Gaussian task. Across the five evaluated epochs, their direction cosines are 0.9996--0.9998 and gradient-to-control magnitude ratios are 0.9310--1.0761. At epoch 50, these values are 0.9998 and 1.0089. Thus, both direction and magnitude approximately satisfy the SOC relation on the evaluated samples, supporting its use in the adversarial update for this experiment. This is an empirical consistency check, not a proof of exact SOC optimality (Appendix~\ref{sec:rebuttal_soc}).

\FloatBarrier
\endgroup

\Needspace{23\baselineskip}
\section{Related Work}

\noindent \textbf{Distributionally Robust Optimization.}
Beginning with the work of~\citet{scarf1957min}, early DRO focused on moment-based ambiguity sets~\citep{ghaoui2003worst,delage2010distributionally}. Because moments have limited expressive power, the DRO literature shifted toward distance-based ambiguity sets.
Wasserstein DRO~\citep{mohajerin2018data,zhao2018data,blanchet2019quantifying,gao2023distributionally,zhang2025short} has attracted attention because of the geometric interpretation of the Wasserstein distance.
KL divergence offers convexity and a tractable inner maximization~\citep{ben2013robust,hu2013kullback}. However, its absolute continuity requirement and abstract interpretation are limitations.
Recently, Sinkhorn DRO has been proposed to improve computational tractability while modeling realistic uncertainty~\citep{wang2025sinkhorn}.

\noindent \textbf{Robust Optimal Transport.}
The OT literature studies how to reduce the sensitivity of transport maps to outliers.
One approach relaxes the marginal constraints in OT~\citep{mukherjee2021outlier}. In particular, unbalanced OT (UOT) formulations are used for this purpose~\citep{balaji2020robust,fatras2021unbalanced,choi2023generative}.
More closely related to our approach, the soft-constrained Schr\"{o}dinger Bridge~\citep{garg2024soft} relaxes terminal boundary constraints to prevent overfitting to noisy training samples. \citet{wang2024outlier} use a UOT distance within a DRO framework to handle outliers in training data.
These methods and DRSB address different forms of robustness. DRSB focuses on distribution shifts at test time.

\section{Conclusion and Limitations}

We formulate and solve the DRSB problem, which aims to transport noisy or shifted input distributions to the target distribution. We focus on the Wasserstein and Sinkhorn variants of DRSB. A globally solved Wasserstein inner problem bounds the fixed-terminal-cost objective, while the linearized Sinkhorn inner problem permits Gaussian sampling.
Exploring other distances for DRSB is a direction for future work.
Minimax adversarial training can be unstable in high-dimensional settings and requires careful hyperparameter tuning and initialization. The additional experiments quantify noise robustness and training overhead. The convergence diagnostics are empirical and do not establish convergence of the full algorithm.

\clearpage
\appendix

\section{Proofs of Results in Section \ref{sec:drsb}}
\label{appendix:drsb_proofs}

\subsection{Proof of Proposition~\ref{prop:g_guidance}}

Fix an admissible pair $(u,\mu)$ and abbreviate $p^{u,\mu}_1$ as $p^u_1$ in this subsection. Assume that $p^u_1$ and $\nu$ have positive densities on a common support and finite KL divergence. The function class is unrestricted, and expectations are population expectations.

\begin{lemma}[Variational form of KL]
\label{lem:variational_kl}
Let $\Phi(g) := \mathbb{E}_{p^u_1}[g(X)] - \log \mathbb{E}_{\nu}[e^{g(X)}]$ be defined over measurable functions $g: \mathcal{X} \rightarrow \mathbb{R}$ with $\mathbb{E}_{p^u_1}[|g|]<\infty$ and $0<\mathbb{E}_{\nu}[e^g]<\infty$. Then, for every such $g$,
\begin{equation}
    \Phi(g) \leq \mathrm{KL}(p^u_1\| \nu)
\end{equation}
and equality holds when
\begin{equation}
    g^*(x) := \log \left(\frac{p^u_1(x)}{\nu(x)} \right).
\end{equation}
\end{lemma}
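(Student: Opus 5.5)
The plan is to use the Gibbs (tilted) measure trick, i.e.\ the Donsker--Varadhan argument. Fix an admissible $g$ with $\mathbb{E}_{p^u_1}[|g|]<\infty$ and $0<Z_g:=\mathbb{E}_\nu[e^{g}]<\infty$. Define the probability measure $\nu_g$ by $d\nu_g/d\nu = e^{g}/Z_g$. This is a valid probability density because $Z_g$ is finite and positive, and $\nu_g$ is equivalent to $\nu$ since $e^g>0$. The goal is the identity
\begin{equation*}
\mathrm{KL}(p^u_1\|\nu)-\Phi(g)=\mathrm{KL}(p^u_1\|\nu_g).
\end{equation*}
The inequality then follows from nonnegativity of KL (Gibbs' inequality, i.e.\ Jensen applied to $-\log$).

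To derive the identity, note that $p^u_1\ll\nu$ because both have positive densities on a common support. Hence $p^u_1\ll\nu_g$, and
\begin{equation*}
\log\frac{dp^u_1}{d\nu_g}=\log\frac{dp^u_1}{d\nu}-g+\log Z_g .
\end{equation*}
Take expectations under $p^u_1$. The first term has finite expectation $\mathrm{KL}(p^u_1\|\nu)$ by assumption, and the second is integrable because $\mathbb{E}_{p^u_1}|g|<\infty$. Linearity is therefore justified, and the identity follows.

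The main obstacle is this integrability bookkeeping. We must make sure no $\infty-\infty$ arises, and that $\mathbb{E}_{p^u_1}[\log(dp^u_1/d\nu_g)]$ is well defined. It is well defined because it equals the KL divergence, whose negative part is always integrable ($x\log x\ge -1/e$ under $\nu_g$). The assumptions of finite KL and $p^u_1$-integrable $g$ handle exactly this.

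For equality, substitute $g^*=\log(p^u_1/\nu)$. Then $Z_{g^*}=\int p^u_1=1$, so $\log Z_{g^*}=0$. Moreover $\mathbb{E}_{p^u_1}[g^*]=\mathrm{KL}(p^u_1\|\nu)$, which is finite, so $g^*$ is admissible and $\Phi(g^*)=\mathrm{KL}(p^u_1\|\nu)$. As a consistency check, $\nu_{g^*}=p^u_1$, so the residual KL vanishes.

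One subtlety here: the admissibility condition $\mathbb{E}_{p^u_1}|g^*|<\infty$ requires $\log(p^u_1/\nu)$ to be absolutely integrable, not merely to have finite mean. This holds because its negative part is controlled: $\mathbb{E}_{p}[(\log p/\nu)^-]\le \int_{\{p<\nu\}}(\nu-p)\le 1$ by $\log x\ge 1-1/x$.
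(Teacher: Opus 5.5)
Your proposal is correct and follows essentially the same argument as the paper: the paper also tilts $\nu$ by $e^{g}/\mathbb{E}_\nu[e^{g}]$ to form $Q_g$, derives $\Phi(g)=\mathrm{KL}(p^u_1\|\nu)-\mathrm{KL}(p^u_1\|Q_g)$, and checks $\log\mathbb{E}_\nu[e^{g^*}]=0$ for the equality case. Your integrability bookkeeping, namely the absolute $p^u_1$-integrability of $\log(dp^u_1/d\nu)$ from finite KL and the admissibility of $g^*$, makes explicit details that the paper's proof leaves implicit.
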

\begin{proof}
Introduce a probability measure $Q_g$ such that
\begin{equation}
    \frac{dQ_g}{d\nu} = \frac{e^g}{\mathbb{E}_\nu[e^g]}
\end{equation}
Then $g$ can be written as
\begin{equation}
    g(X) = \log \frac{dQ_g}{d\nu} + \log \mathbb{E}_{\nu}[e^{g(X)}]
\end{equation}
and $\Phi(g)$ becomes
\begin{align}
    \Phi(g) &= \int g(X) \, dp^u_1(X) - \log \mathbb{E}_{\nu}[e^{g(X)}] \\
    &= \int \left\{ \log \frac{dQ_g}{d\nu} + \log \mathbb{E}_{\nu}[e^{g(X)}] \right\} dp^u_1(X) - \log \mathbb{E}_{\nu}[e^{g(X)}] \\
    &= \int \log \frac{dQ_g}{d\nu} \, dp^u_1 \\
    &= \int \left(\log \frac{dQ_g}{dp^u_1} + \log \frac{dp^u_1}{d\nu}\right) \, dp^u_1 \\
    &= \mathrm{KL}(p^u_1\| \nu)  - \mathrm{KL}(p^u_1\|Q_g) \leq \mathrm{KL}(p^u_1\| \nu)
\end{align}
Setting $g^*(X) := \log (dp^u_1/d\nu)$ gives $\log \mathbb{E}_{\nu}[e^{g^*}] = \log \int (dp^u_1/d\nu) \, d\nu = 0$, hence
\begin{equation}
    \Phi(g^*) = \int \log \left( \frac{dp^u_1}{d\nu} \right) dp^u_1 = \mathrm{KL}(p^u_1\|\nu).
\end{equation}
\end{proof}
Therefore, the optimal $g$ recovers the log-density ratio $\log(p^u_1/\nu)$.

\begin{lemma}[Binary Cross-Entropy Minimizer]
\label{lem:bce_minimizer}
The minimizer of the binary cross-entropy loss
\begin{equation}
    \mathcal{L}(g) = \mathbb{E}_{x \sim p^u_1}\left[\log \left(1+e^{-g(x)}\right)\right] + \mathbb{E}_{x\sim \nu}  \left[ \log \left(1+e^{g(x)} \right) \right]
\end{equation}
is $g^*(x) = \log \left( p^u_1(x) / \nu(x) \right)$.
\end{lemma}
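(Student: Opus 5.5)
The plan is to minimize the loss pointwise. Write both expectations as integrals against Lebesgue measure using the positive densities $p^u_1$ and $\nu$:
\begin{equation*}
\mathcal{L}(g)=\int \Bigl[p^u_1(x)\log\bigl(1+e^{-g(x)}\bigr)+\nu(x)\log\bigl(1+e^{g(x)}\bigr)\Bigr]\,dx .
\end{equation*}
Since the function class is unrestricted, the value $g(x)$ can be chosen independently at each $x$. So it suffices to minimize, for fixed $a=p^u_1(x)>0$ and $b=\nu(x)>0$, the scalar function
\begin{equation*}
h(s)=a\log(1+e^{-s})+b\log(1+e^{s}),\qquad s\in\mathbb{R}.
\end{equation*}

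Next I will show that $h$ is strictly convex and find its stationary point. With $\sigma(s)=1/(1+e^{-s})$ one has $h'(s)=-a(1-\sigma(s))+b\,\sigma(s)$ and $h''(s)=(a+b)\sigma(s)(1-\sigma(s))>0$. Setting $h'(s)=0$ gives $\sigma(s)=a/(a+b)$, that is, $e^{s}=a/b$. The unique global minimizer is therefore $s^*=\log(a/b)$, which is finite because $a,b>0$; this is exactly where positivity of both densities is used. Moreover $h(s)\to\infty$ as $|s|\to\infty$, so nothing escapes to infinity.

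Then I pass from the pointwise statement to the integral. For any measurable $g$, the pointwise minimality $h_x(g(x))\ge h_x(g^*(x))$ integrates to $\mathcal{L}(g)\ge\mathcal{L}(g^*)$, where $g^*=\log(p^u_1/\nu)$, which is measurable. Strict convexity gives equality only if $g=g^*$ almost everywhere with respect to $p^u_1+\nu$. Hence $g^*$ is the minimizer, unique up to null sets.

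The only real obstacle is checking that $\mathcal{L}(g^*)$ is finite, so that the comparison is meaningful and not $\infty\ge\infty$. At $g^*$ the loss equals
\begin{equation*}
\int p^u_1\log\frac{p^u_1+\nu}{p^u_1}+\int \nu\log\frac{p^u_1+\nu}{\nu}.
\end{equation*}
This is $2\log 2-2\,\mathrm{JS}(p^u_1,\nu)$, which lies in $[0,2\log 2]$ and is therefore finite. With this, Lemma~\ref{lem:bce_minimizer} follows. Combined with Lemma~\ref{lem:variational_kl}, it yields both conclusions of Proposition~\ref{prop:g_guidance}.
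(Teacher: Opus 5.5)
Your proposal is correct and follows essentially the same route as the paper's proof. You minimize the integrand pointwise over the scalar $g(x)$, use strict convexity together with the stationarity condition $e^{g(x)}\nu(x)=p^u_1(x)$, and then integrate the pointwise inequality; your extra checks (finiteness of $\mathcal{L}(g^*)=2\log 2-2\,\mathrm{JS}(p^u_1,\nu)$ and uniqueness up to null sets) are not in the paper but make the comparison fully rigorous.
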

\begin{proof}
For each $x$, minimize the population-loss integrand over the scalar $z=g(x)$:
\begin{align}
    \ell_x(z) &= p^u_1(x)\log\left(1+e^{-z}\right)+\nu(x)\log\left(1+e^z\right), \\
    \ell_x'(z) &= \frac{-p^u_1(x)+e^z\nu(x)}{1+e^z}, \\
    \ell_x''(z) &= \frac{\left(p^u_1(x)+\nu(x)\right)e^z}{\left(1+e^z\right)^2}>0.
\end{align}
Thus the pointwise minimum satisfies $e^{g^*(x)}\nu(x)=p^u_1(x)$, giving
\begin{equation}
    g^*(x)=\log\frac{p^u_1(x)}{\nu(x)}.
\end{equation}
Integrating the pointwise inequality proves the result over unrestricted measurable functions. A stationary point in neural-network parameters need not satisfy this pointwise condition.
\end{proof}

Proposition~(\ref{prop:g_guidance}) follows from Lemma~(\ref{lem:variational_kl}) and Lemma~(\ref{lem:bce_minimizer}).
By Lemma~(\ref{lem:bce_minimizer}), the minimizer of the binary cross-entropy loss recovers the log-density ratio $g^*(x) = \log(p^u_1(x)/\nu(x))$.
Substituting $g^*$ into the variational form in Lemma~(\ref{lem:variational_kl}) achieves $\Phi(g^*) = \mathrm{KL}(p^u_1 \| \nu)$, which gives \cref{eq:kl_variational}.

\subsection{\texorpdfstring{Proof of \cref{thm:DRSB_solved_with_DRO}}{Proof of the Variational Formulation}}

\begin{proof}
For every admissible pair $(u,\mu)$, Lemma~\ref{lem:variational_kl} gives
\begin{equation}
    \mathrm{KL}\left(p^{u,\mu}_1\|\nu\right)
    =\sup_g\left\{\mathbb{E}_{p^{u,\mu}_1}[g(X_1)]-\log\mathbb{E}_{\nu}[e^{g(X_1)}]\right\}.
\end{equation}
Substitution into \cref{eq:drsb_obj}, followed by conditioning on $X_0$, yields
\begin{align}
    &\min_u\max_{\mu\in\mathfrak{M}_{\epsilon}(\hat{\mu})}\sup_g
    \left\{\mathbb{E}_{p^{u,\mu}}\left[\int_0^1\frac{1}{2}\|u_t^\theta(X_t)\|^2dt+\rho g(X_1)\right]
    -\rho\log\mathbb{E}_{\nu}[e^{g(X_1)}]\right\} \\
    &=\min_u\max_{\mu\in\mathfrak{M}_{\epsilon}(\hat{\mu})}\sup_g
    \left\{\mathbb{E}_{X_0\sim\mu}[V(u,X_0;g)]
    -\rho\log\mathbb{E}_{\nu}[e^{g(X_1)}]\right\},
\end{align}
which is \cref{eq:drsb_variational}. This substitution is pointwise in $(u,\mu)$ and does not interchange the minimization over $u$ with an inner maximization.
For a fixed pair, the normalized maximizer is
\begin{equation}
    g^*(x;u,\mu)=\log\frac{p^{u,\mu}_1(x)}{\nu(x)},
    \qquad \log\mathbb{E}_{\nu}[e^{g^*(X_1;u,\mu)}]=0.
\end{equation}
Because this maximizer depends on both $u$ and $\mu$, it cannot be fixed while optimizing them. For a separately fixed $g$, the logarithmic term is constant with respect to $u$ and $\mu$ and may be omitted from their optimization. This gives the fixed-$g$ subproblem in \cref{eq:drsb_dro_form}, rather than an equivalent formulation with a single fixed $g^*$ for all pairs.
\end{proof}

\section{Details on the Derivation of DRSB Variants}
\label{appendix:detail_drsb_vars}

Throughout this section, $g$ is fixed and $V(u,x)$ abbreviates $V(u,x;g)$; in an inner distribution update, $u$ is also fixed. The dual identities below concern this fixed-cost subproblem, under the finiteness and regularity assumptions of the cited DRO duality results.

\subsection{Details on Wasserstein DRSB}
\label{appendix:wdrsb}
We start from the fixed-$g$ Wasserstein DRSB subproblem
\begin{subequations}
\begin{align}
    \min_{u} \max_{\mu \in \mathfrak{M}_{\epsilon}(\hat{\mu})} & \mathbb{E}_{p^u} \left[ \int_0^1  \frac{1}{2}\| u^{\theta}_t(X_t)\|^2 dt + \rho g(X_1) \right] \\
    \text{s.t.} \,\, & dX_t = u^{\theta}_t(X_t) dt + \sigma dW_t,\, X_0 \sim \mu \\
    \quad & \mathfrak{M}_{\epsilon}(\hat{\mu}) = \{ \mu : W_2^2(\mu, \hat{\mu}) \le \epsilon \}
\end{align}
\end{subequations}
where the squared Wasserstein-2 distance is defined as
\begin{equation}
    W_2^2(\mu, \hat{\mu}) := \min_{\gamma \in \Gamma(\mu, \hat{\mu})} \int \|X_0 - \hat{X}_0\|^2 \gamma(dX_0, d\hat{X}_0).
\end{equation}
\label{appendix:prob:wdrsb_prob}

Consider the inner maximization
\begin{subequations}
\begin{align}
    \max_{\mu \in \mathfrak{M}_{\epsilon}(\hat{\mu})} & \mathbb{E}_{p^u} \left[ \int_0^1  \frac{1}{2}\| u^{\theta}_t(X_t)\|^2 dt + \rho g(X_1) \right] \\
    \text{s.t.} \,\, & dX_t = u^{\theta}_t(X_t) dt + \sigma dW_t,\, X_0 \sim \mu \\
    \quad & \mathfrak{M}_{\epsilon}(\hat{\mu}) = \{ \mu : W_2^2(\mu, \hat{\mu}) \le \epsilon \}
\end{align}
\end{subequations}
Define the Lagrangian as
\begin{align}
\mathcal{L}(\mu, \lambda) &:= \mathbb{E}_{\mathbf{X} \sim p^u}\left[ \int_0^1 \frac{1}{2}\| u^{\theta}_t(X_t)\|^2 dt + \rho g(X_1)\right] - \lambda \left( \int \|X_0 - \hat{X}_0\|^2 \pi(dX_0, d\hat{X}_0) - \varepsilon \right)
\end{align}
Conditioning on $X_0$,
\begin{align}
\mathcal{L}(\mu, \lambda) &= \mathbb{E}_{X_0 \sim \mu} \left[ V(u, X_0) \right] - \lambda \left( \int \|X_0 - \hat{X}_0\|^2 \pi(dX_0, d\hat{X}_0) - \varepsilon \right)
\end{align}
where $V(u, X_0) := \mathbb{E}_{\mathbf{X}_{\cdot|0} \sim p^u_{\cdot|0}}\left[ \int_0^1 \frac{1}{2}\| u^{\theta}_t(X_t)\|^2 dt + \rho g(X_1) \middle| X_0\right]$ is the cost function at $t=0$.
Then we have
\begin{align}
\mathcal{L}(\mu, \lambda) &= \lambda \varepsilon + \int V(u, X_0) \pi(X_0, \hat{X}_0) \, dX_0 \, d\hat{X}_0 - \lambda \int \|X_0 - \hat{X}_0\|^2 \pi(X_0, \hat{X}_0) \, dX_0 \, d\hat{X}_0
\end{align}
The dual problem is
\begin{align}
& \min_{\lambda \geq 0} \, \lambda \varepsilon + \max_{\mu} \, \mathbb{E}_{(X_0,\hat{X}_0)\sim \pi} \left[ V(X_0) - \lambda \|X_0 - \hat{X}_0\|^2 \right]
\end{align}
By the interchangeability principle~\citep{zhang2025short}, the dual problem can be written as
\begin{align}
    & \min_{\lambda \geq 0} \, \lambda \varepsilon +  \int \max_{X_0} \left\{ V(X_0) - \lambda \|X_0 - \hat{X}_0\|^2 \right\} \hat{\mu}(\hat{X}_0) \, d\hat{X}_0 \\
    &= \min_{\lambda \geq 0} \, \lambda \varepsilon -  \int \min_{X_0} \left\{ \lambda \|X_0 - \hat{X}_0\|^2 - V(X_0) \right\} \hat{\mu}(\hat{X}_0) \, d\hat{X}_0
\end{align}
For the empirical distribution $\hat{\mu} = \frac{1}{N}\sum_{i=1}^N \delta_{\hat{X}_0^i}$, the expectation reduces to a finite sum, giving
\begin{equation}
    \min_{\lambda \geq 0} \, \lambda \varepsilon - \frac{1}{N}\sum_{i=1}^N \min_{X_0} \left\{ \lambda \|X_0 - \hat{X}^i_0\|^2 - V(X_0) \right\}.
\end{equation}

\subsubsection{\texorpdfstring{Derivation of Update Rules (\cref{eq:worst_case_gd} and \cref{eq:lambda_update_rule})}{Derivation of Update Rules}}

Equivalently, the samplewise maximization objective is
\begin{equation}
    \max_{X_0} \mathcal{L}_1(X_0) :=  V(X_0) -\lambda \|X_0 -\hat{X}_0^i\|^2.
\label{appendix:eq:one_wst_case}
\end{equation}
Taking the gradient of $\mathcal{L}_1(X_0)$ with respect to $X_0$ gives
\begin{equation}
    \nabla_{x} \mathcal{L}_1(X_0) = \nabla_{x} V(X_0) - 2\lambda (X_0 - \hat{X}_0^i)
\end{equation}
If $u=u^*$ solves the SOC problem for this fixed $g$ and its value function is differentiable, \cref{eq:optimal_control} gives
\begin{equation}
    \nabla_{x} \mathcal{L}_1(X_0) = -u^*_0(X_0) - 2\lambda (X_0 - \hat{X}_0^i)
\end{equation}
Replacing $u^*$ by the trained drift gives the implemented ascent update. This substitution is approximate unless the fixed-$g$ SOC optimality condition is satisfied:
\begin{equation*}
    X_0 \leftarrow X_0 + \eta \left[ - u^\theta_0(X_0) - 2\lambda (X_0 - \hat{X}_0^i) \right]
\end{equation*}
where $\eta$ is the gradient ascent step size.
Each sample update in \cref{eq:worst_case_gd} requires one evaluation of $u^\theta_0$; total overhead also depends on the number of sample and dual updates.

Similarly, we can derive the update rule in \cref{eq:lambda_update_rule} from the optimality condition for the worst-case optimization.
For \cref{appendix:eq:one_wst_case}, the first-order optimality condition is
\begin{equation}
    \nabla_{x} \mathcal{L}_1(X^{(i)*}_0) = \nabla_{x} V(X^{(i)*}_0) - 2\lambda (X^{(i)*}_0 - \hat{X}_0^i) = 0
\end{equation}
Therefore, the worst-case sample satisfies
\begin{equation}
    \nabla_{x} V(X^{(i)*}_0) = 2\lambda (X^{(i)*}_0 - \hat{X}_0^i)
\label{appendix:eq:wst_case_optimality}
\end{equation}
Define the objective for the dual variable as
\begin{equation}
    \inf_{\lambda \geq 0} f(\lambda) := \lambda \varepsilon -  \frac{1}{N}\sum_{i=1}^N \left[ \lambda \|X^{(i)*}_0 - \hat{X}^i_0\|^2 - V(X^{(i)*}_0) \right]
\end{equation}
By taking the gradient with respect to $\lambda$, we have
\begin{align}
    \frac{df(\lambda)}{d\lambda} &= \varepsilon + \frac{1}{N}\sum_{i=1}^N \left[  \nabla_{x} V(X^{(i)*}_0) \frac{dX_0^{(i)*}}{d\lambda} - \|X^{(i)*}_0 - \hat{X}^i_0\|^2 \right. \notag \\
    & \qquad \left. - 2\lambda(X_0^{(i)*} - \hat{X}_0^i) \frac{dX_0^{(i)*}}{d\lambda} \right]
\end{align}
From \cref{appendix:eq:wst_case_optimality}, we can cancel out the $\frac{dX_0^{(i)*}}{d\lambda}$ terms:
\begin{equation}
    \frac{df(\lambda)}{d\lambda} = \varepsilon - \frac{1}{N}\sum_{i=1}^N \   \|X^{(i)*}_0 - \hat{X}^i_0\|^2
\end{equation}
Consequently, we can perform gradient descent on $\lambda$:
\begin{equation*}
    \lambda \leftarrow \lambda - \zeta \left( \varepsilon - \frac{1}{N}\sum_{i=1}^N \|X^{(i)*}_0 - \hat{X}^i_0\|^2 \right)
\end{equation*}
where $\zeta$ is the gradient descent step size. This update is valid only while $\lambda$ remains nonnegative; otherwise feasibility requires projection onto $[0,\infty)$.

\subsection{Details on Sinkhorn DRSB}
\label{appendix:SDRSB}
We start from the fixed-$g$ Sinkhorn DRSB subproblem
\begin{subequations}
\begin{align}
    \min_{u} \max_{\mu \in \mathfrak{M}^S_{\epsilon}(\hat{\mu})} &\quad \mathbb{E}_{p^u}\left[ \int_0^1 \frac{1}{2}\| u^{\theta}_t(X_t)\|^2 dt + \rho g(X_1)\right] \\
    \text{subject to} \quad & dX_t = u^{\theta}_t(X_t) dt + \sigma dW_t, \quad X_0 \sim \mu \\
    \quad & \mathfrak{M}^S_{\epsilon}(\hat{\mu}) = \{ \mu : \mathcal{W}_\kappa(\hat{\mu}, \mu) \le \epsilon \}
\end{align}
\end{subequations}
\label{appendix:prob:sinkhorn_DRSB}
where $\mathcal{W}_\kappa(\hat{\mu}, \mu)$ is the Sinkhorn distance with entropy regularization parameter $\kappa$:
\begin{equation}
    \mathcal{W}_{\kappa}(\hat{\mu}, \mu) = \min_{\gamma \in \Gamma(\hat{\mu}, \mu)} \left\{ \mathbb{E}_{(\hat{X}_0, X_0) \sim \gamma} \left[\|\hat{X}_0- X_0\|^2  +  \kappa \log \left( \frac{d\gamma(\hat{X}_0, X_0)}{d\hat{\mu}(\hat{X}_0) dX_0} \right)  \right] \right\}.
\end{equation}
Denote the value of the inner maximization by $\mathcal{V}$.
\begin{align}
    \mathcal{V} &:= \max_{\mu \in \mathfrak{M}^S_{\epsilon}(\hat{\mu})} \mathbb{E}_{X_0 \sim \mu}\left[V(X_0) \right]  \\
    &= \max_{\gamma} \min_{\lambda \geq 0} \lambda \varepsilon + \mathbb{E}_{\hat{X}_0\sim \hat{\mu}}\mathbb{E}_{X_0\sim \gamma(X_0|\hat{X}_0)} \notag \\
    & \qquad \left[V(X_0) - \lambda  \|\hat{X}_0 - X_0\|^2 - \lambda \kappa \log \left( \frac{d\gamma(X_0|\hat{X}_0)}{dX_0} \right)  \right]
\end{align}
Under the assumptions for strong duality in Sinkhorn DRO~\citep{wang2025sinkhorn}, we switch the order:
\begin{align}
    \mathcal{V} &= \min_{\lambda \geq 0} \lambda \varepsilon + \mathbb{E}_{\hat{X}_0\sim \hat{\mu}} \Bigg[ \max_{\gamma(X_0|\hat{X}_0)} \mathbb{E}_{X_0\sim \gamma(X_0|\hat{X}_0)} \notag \\
    & \qquad \left[V(X_0) - \lambda  \|\hat{X}_0 - X_0\|^2 - \lambda \kappa \log \left( \frac{d\gamma(X_0|\hat{X}_0)}{dX_0} \right)  \right] \Bigg]
\label{eq:derivation_inner_maximization}
\end{align}
For $\lambda>0$, $\kappa>0$, and a finite normalizing integral, the entropy-regularized inner maximization admits the following closed-form solution. Boundary cases at $\lambda=0$ are understood by limits.
The dual form is
\begin{equation}
    \mathcal{V} = \inf_{\lambda > 0} \lambda \varepsilon + \mathbb{E}_{\hat{X}_0\sim \hat{\mu}} \left[ \lambda \kappa \log \int e^{ \left( V(X_0) - \lambda \|X_0 - \hat{X}_0\|^2 \right)/\lambda \kappa} dX_0 \right]
\label{eq:sinkhorn_dual}
\end{equation}
and the optimal conditional distribution is
\begin{equation}
    \gamma^*(X_0 | \hat{X}_0) \propto e^{\left(V(X_0) - \lambda \|X_0 - \hat{X}_0\|^2\right)/\lambda\kappa}
\end{equation}

\subsubsection{Second-order approximation}
We approximate $V(X_0)$ up to second order:
\begin{equation}
    V(X_0) \approx V(\hat{X}_0) + \nabla V(\hat{X}_0)^\top \delta X_0 + \frac{1}{2} \delta X_0^\top \nabla_{xx}V(\hat{X}_0) \, \delta X_0
\end{equation}
where $\delta X_0 = X_0 - \hat{X}_0$.
Define $g := \frac{\nabla V(\hat{X}_0)}{\lambda \kappa}$ and $A := \frac{2}{\kappa}I_d - \frac{\nabla_{xx}V(\hat{X}_0)}{\lambda \kappa}$. Here $g$ denotes a local gradient vector; the terminal-cost function remains fixed.
For the quadratic approximation and $A\succ0$, the corresponding conditional distribution is Gaussian. The equalities below apply to the approximated cost:
\begin{align}
    \gamma^*(X_0 | \hat{X}_0) &\propto \exp \left( g^\top \delta X_0 - \frac{1}{2} \delta X_0^\top A \, \delta X_0 \right) \\
    &\propto \exp \left( -\frac{1}{2} (\delta X_0 - A^{-1}g )^\top A (\delta X_0 - A^{-1}g ) \right)
\end{align}
As a result,
\begin{align}
    \gamma^*(X_0 | \hat{X}_0) &= \mathcal{N}( \hat{X}_0 + A^{-1}g, \, A^{-1})
\end{align}
with the full expression
\begin{align}
    \gamma^*(X_0 | \hat{X}_0) = \mathcal{N}\left( \hat{X}_0 + \left(  \frac{2}{\kappa}I - \frac{\nabla_{xx}V}{\lambda \kappa} \right)^{-1} \frac{\nabla V}{\lambda \kappa}, \, \left( \frac{2}{\kappa}I - \frac{\nabla_{xx}V}{\lambda \kappa}  \right)^{-1} \right)
\end{align}
where $V$ and its derivatives are evaluated at $\hat{X}_0$.

\subsubsection{Simplification of the dual form}
Denote $I := \int e^{ \left( V(X_0) - \lambda \|X_0 - \hat{X}_0\|^2 \right)/\lambda \kappa} dX_0$.
For the same quadratic approximation, completing the square in terms of $g$ and $A$ gives the following approximate integral and dual objective:
\begin{align}
    I &= e^{ \frac{V(\hat{X}_0)}{\lambda \kappa} } \int \exp \left\{ g^\top \delta X_0 - \frac{1}{2}\delta X_0^\top A\, \delta X_0 \right\} dX_0 \\
    &= e^{ \frac{V(\hat{X}_0)}{\lambda \kappa} + \frac{1}{2} g^\top A^{-1} g } \int \exp \left\{ -\frac{1}{2}(\delta X_0 - A^{-1}g)^\top A (\delta X_0 - A^{-1}g) \right\} dX_0
\end{align}
Evaluating the Gaussian integral under the condition $A \succ 0$ gives
\begin{equation}
    I = (2\pi)^{d/2} \det (A^{-1})^{1/2} \exp \left( \frac{V(\hat{X}_0)}{\lambda \kappa} + \frac{1}{2} g^\top A^{-1} g \right)
\end{equation}
Therefore,
\begin{align}
    \lambda \kappa \log I = V(\hat{X}_0) + \frac{\lambda \kappa}{2} g^\top A^{-1}g + \lambda \kappa \log\left( (2\pi)^{d/2} \det (A^{-1})^{1/2} \right)
\end{align}
and the dual objective becomes
\begin{align}
    \mathcal{V} = \inf_{\lambda > 0} \lambda \left( \varepsilon + \frac{\kappa d}{2} \log 2\pi \right) + \mathbb{E}_{\hat{X}_0 \sim \hat{\mu}} \left[ V(\hat{X}_0) + \frac{\lambda \kappa}{2} g^\top A^{-1} g + \frac{\lambda \kappa}{2} \log \det(A^{-1}) \right]
\end{align}
Solving for $\lambda$ in this form is complicated due to the dependence of $A^{-1}$ on $\lambda$.

\subsubsection{First-order approximation}
In high-dimensional cases, computing the Hessian $\nabla_{xx}V(\hat{X}_0)$ is expensive.
For the linearized cost, the Hessian term is omitted, giving $A = \frac{2}{\kappa}I_d$ and $A^{-1} = \frac{\kappa}{2}I_d$. The following expressions are exact for this linearized inner problem.
The conditional distribution simplifies to
\begin{equation}
    \gamma^*(X_0 | \hat{X}_0) = \mathcal{N} \left( \hat{X}_0 + \frac{1}{2\lambda} \nabla V(\hat{X}_0) \,, \frac{\kappa}{2} I \right)
\label{eq:worst_condit_dist_opt_first_order_approx}
\end{equation}
The $\lambda$ optimization reduces to
\begin{equation}
    \inf_{\lambda > 0} \lambda \left( \varepsilon + \frac{\kappa d}{2} \log \pi \kappa \right) + \mathbb{E}_{\hat{X}_0 \sim \hat{\mu}} \left[ V(\hat{X}_0 ) + \frac{\|\nabla V(\hat{X}_0) \|^2}{4 \lambda} \right]
\end{equation}
For $\varepsilon+\frac{\kappa d}{2}\log(\pi\kappa)>0$ and $\mathbb{E}_{\hat{\mu}}[\|\nabla V\|^2]>0$, this yields the positive closed-form solution
\begin{equation}
    \lambda^* = \sqrt{\left( \varepsilon + \frac{\kappa d}{2} \log \pi \kappa \right)^{-1} \mathbb{E}_{\hat{X}_0 \sim \hat{\mu}} \left[ \frac{\|\nabla V(\hat{X}_0) \|^2}{4} \right]}
\label{eq:lambda_opt_first_order_approx}
\end{equation}

\section{Stochastic Optimal Control Theory}
\label{appendix:soc}
Here we derive \cref{eq:optimal_control} using stochastic optimal control theory. We refer the reader to \citet{kappen2005path} for more details.
The terminal cost is fixed, and the HJB derivation assumes a sufficiently smooth value function and admissible controls for the verification argument. To apply this section to DRSB, replace the terminal cost $g$ below by $\rho g$. The identity holds for an optimal control and its optimal value, not for an arbitrary intermediate neural drift.
Consider the stochastic optimal control (SOC) problem
\begin{subequations}
\begin{align}
    \min_u \quad & \mathbb{E}\left[ \int_0^1 \frac{1}{2} \| u_t(X_t) \|^2 dt + g(X_1)\right] \\
    \text{subject to} \quad & dX_t = u_t(X_t) dt + \sigma dW_t \label{eq:sde} \\
    \quad & X_0 \sim \mu ,\,\,
\end{align}
\label{eq:soc_prob}
\end{subequations}
Define the cost function as
\begin{equation}
    c_t(X_t, u_t) := \mathbb{E}\left[ \int_t^1 \frac{1}{2} \| u_s(X_s) \|^2 ds + g(X_1) \right]
\end{equation}
The value function at time $t$ is the optimal cost from time $t$ to $1$.
\begin{equation}
    V_t(x) := \min_{\substack{u(s) \\ t \leq s \leq 1 }} \mathbb{E} \left[ \int_t^1 \frac{1}{2} \| u_s(X_s) \|^2 ds + g(X_1) \,\, \middle| \,\, X_t=x  \right]
\end{equation}
The cost function depends on the control $u_t$, whereas the value function is obtained by minimizing over the control. We split the integral into the intervals from $t$ to $t+dt$ and from $t+dt$ to 1. Substituting $t=1$ gives $V_1(x) = g(x)$.
\begin{equation}
     V_t(X_t) = \min_{\substack{u(s) \\ t \leq s \leq 1 }} \mathbb{E} \left[ \int_t^{t+dt} \frac{1}{2} \| u_s(X_s) \|^2 ds + \int_{t+dt}^{1} \frac{1}{2} \| u_s(X_s) \|^2 ds + g(X_1)  \,\, \middle| \,\, X_t   \right]
\end{equation}
Using Bellman's principle of optimality~\citep{bellman1954theory}, we have
\begin{align}
    V_t(X_t) &= \min_{\substack{u(s) \\ t \leq s \leq 1 }} \mathbb{E} \Bigg[ \int_t^{t+dt} \frac{1}{2} \| u_s(X_s) \|^2 ds \notag \\
    & \quad + \min_{\substack{u(s) \\ t+dt \leq s \leq 1}} \mathbb{E} \left[ \int_{t+dt}^{1} \frac{1}{2} \| u_s(X_s) \|^2 ds + g(X_1) \,\middle|\, X_{t+dt} \right]  \,\Bigg|\, X_t \Bigg] \\
    &= \min_{\substack{u(s) \\ t \leq s \leq 1 }} \mathbb{E} \left[ \int_t^{t+dt} \frac{1}{2} \| u_s(X_s) \|^2 ds + V_{t+dt}(X_{t+dt})  \,\middle|\, X_t   \right] \label{eq:value_fnc_exp}
\end{align}
Applying a Taylor expansion to $V_{t+dt}(X_{t+dt})$ at $(X_t, t)$ gives
\begin{equation}
    \mathbb{E}[ V_{t+dt}(X_{t+dt})] \approx \mathbb{E}\left[ V_t(X_t) + \frac{\partial V_t(X_t)}{\partial t} dt + (\nabla V_t)^\top dX_t + \frac{1}{2} (dX_t)^\top \nabla^2 V (dX_t) \right]
\end{equation}
Applying stochastic calculus to \cref{eq:sde} gives
\begin{align}
    \mathbb{E}[dX_t] &= u_t(X_t) dt \\
    \mathbb{E}[(dX_t)(dX_t)^\top] &= \sigma^2 \mathbf{I} dt
\end{align}
Therefore,
\begin{equation}
    \mathbb{E}[ V_{t+dt}(X_{t+dt})] = \mathbb{E}\left[ V_t(X_t) + \frac{\partial V_t(X_t)}{\partial t} dt + (\nabla V_t)^\top u_t(X_t) dt + \frac{\sigma^2}{2} \Delta V_t dt \right] \label{eq:taylor_exp}
\end{equation}
Taking the limit as $dt \rightarrow 0$ and substituting \cref{eq:taylor_exp} into \cref{eq:value_fnc_exp} gives
\begin{equation}
    - \frac{\partial V_t(X_t)}{\partial t} = \min_u \left\{   \frac{1}{2} \| u_t(X_t) \|^2 + (\nabla V_t)^\top u_t(X_t) + \frac{\sigma^2}{2} \Delta V_t  \right\}
\end{equation}
which is the Hamilton--Jacobi--Bellman (HJB) equation.
By differentiating the right-hand side, we obtain the optimal control $u^*$:
\begin{equation}
    u_t^*(X_t) = - \nabla V_t(X_t)
\label{appendix:eq:opt_control}
\end{equation}
At time $t=0$, we have \cref{eq:optimal_control}.

\section{Solving the SOC with Adjoint Matching}
\label{appendix:AM}

\subsection{Adjoint Matching}
This section summarizes results from the original adjoint matching paper. We refer the reader to \citet{domingo2024adjoint}, Sections 5.2 and E, for more details.
The initial distribution and terminal cost are fixed throughout this section. We assume sufficient differentiability and integrability for pathwise differentiation with additive noise and interchange with expectation. Gradients are column vectors and $\nabla_xu$ denotes the Jacobian.
Adjoint matching solves the optimal control problem
\begin{align}
    & \min_{u^\theta} \,\, \mathbb{E}_{p^u} \left[ \int_0^1 \frac{1}{2} \|u_t^\theta(X^u_t)\|^2 dt + g(X_1^u)\right] \\
    & \text{s.t.} \,\, dX^u_t = u_t^\theta(X^u_t)dt + \sigma_t dW_t \\
    & \,\,\quad X_0 \sim \mu
\end{align}
Here, we explicitly include the superscript $u$ in the state notation $X^u$, unlike in previous sections. This notation is useful in the derivation of the adjoint matching objective.

\subsubsection{Derivation of the Adjoint System}

We introduce a Lagrange multiplier process $a: \Omega \times[0,1] \rightarrow \mathbb{R}^n$ for the dynamics. The Lagrangian of the optimal control problem is
\begin{align}
    \mathcal{L}(u_t^\theta, a_t ; \bm{X}^u_t, t) &= \int_t^1 \frac{1}{2} \|u_t^\theta(X^u_t)\|^2 dt + g(X_1^u) \notag \\
    & \quad - \int_t^1 a_t^\top \left( dX^u_t - u_t^\theta(X_t^u)dt - \sigma_t dW_t \right) \\
    &=  \int_t^1 \frac{1}{2} \|u_t^\theta(X^u_t)\|^2 dt + g(X_1^u) - \int_t^1 a_t^\top  dX^u_t \notag \\
    & \quad + \int_t^1 a_t^\top u_t^\theta(X_t^u)dt + \int_t^1 \sigma_t a_t^\top dW_t
\end{align}
Applying integration by parts to the second integral gives
\begin{align}
    \mathcal{L}(u_t^\theta, a_t, \bm{X}^u_t) &= \int_t^1 \frac{1}{2} \|u_t^\theta(X^u_t)\|^2 dt + g(X_1^u) - a_1^\top X_1 + a_t^\top X_t \notag \\
    & \quad + \int_t^1 (X_t^u)^\top \left( \frac{da_t}{dt} \right) dt + \int_t^1 a_t^\top u_t^\theta(X_t^u)dt + \int_t^1 \sigma_t a_t^\top dW_t
\end{align}
Taking the first variation with respect to the trajectory $\bm{X}_t$ yields
\begin{align}
    \frac{\delta}{\delta \bm{X}^u_t}\mathcal{L} &= (\nabla_x g(X_1^u) - a_1)^\top h_1 \notag \\
    & \quad + \int_t^1 \left( \frac{da_t}{dt} + \left(\nabla_x u_t^\theta(X_t^u)\right)^\top a_t + \nabla_x\left( \frac{1}{2} \|u_t^\theta(X^u_t)\|^2\right) \right)^\top h_t \, dt = 0,
\end{align}
which must hold for all $h_t$. Thus, the coefficients of $h_t$ must be zero, and the adjoint system
\begin{subequations}
\label{eq:adjoint-system}
\begin{empheq}[left={\empheqlbrace}]{align}
\frac{d a_t}{dt} &= - \left(\nabla_x u_t^\theta(X_t^u)\right)^\top a_t - \nabla_x\left( \tfrac{1}{2} \|u_t^\theta(X_t^u)\|^2 \right) \label{eq:adjoint-ode} \\
a_1 &= \nabla_x g(X_1^u) \label{eq:adjoint-terminal}
\end{empheq}
\end{subequations}
is derived.

Next, we add the dynamics constraint to the cost-to-go function $\mathcal{J}(u, x_t)$, using the adjoint state as its multiplier.
\begin{align}
    \mathcal{J}(u, x_t) &:= \int_t^1 \frac{1}{2} \| u_t^\theta(X^u_t)\|^2 dt + g(X^u_1) \\
    &= \int_t^1 \frac{1}{2} \| u_t^\theta(X^u_t)\|^2 dt + g(X^u_1) - \int_t^1 a_t^\top \left( dX^u_t - u_t^\theta(X_t^u)dt - \sigma_t dW_t \right) \\
    &= \int_t^1 \frac{1}{2} \| u_t^\theta(X^u_t)\|^2 dt + g(X^u_1) - a_1^\top X^u_1 + a_t^\top X^u_t \notag \\
    & \quad + \int_t^1 (X_t^u)^\top \left( \frac{da_t}{dt} \right) dt + \int_t^1 a_t^\top u_t^\theta(X_t^u)dt + \int_t^1 \sigma_t a_t^\top dW_t
\end{align}
Taking the gradient with respect to the initial state $X_t=x_t$ yields
\begin{align}
    \nabla_{x_t}\mathcal{J}(u,x_t)
    &=\int_t^1\left(\nabla_{x_t}X_s^u\right)^\top
    \left(\frac{da_s}{ds}+\left(\nabla_xu_s^\theta(X_s^u)\right)^\top a_s
    +\nabla_x\left(\frac{1}{2}\|u_s^\theta(X_s^u)\|^2\right)\right)ds \notag\\
    &\quad+\left(\nabla_{x_t}X_1^u\right)^\top\left(\nabla_xg(X_1^u)-a_1\right)+a_t\\
    &=a_t\qquad\left(\text{by \cref{eq:adjoint-system}}\right).
\end{align}
As a result, the solution of the adjoint system in \cref{eq:adjoint-system} is
\begin{equation}
    a_t(\bm{X}^u_t, u^\theta) = \nabla_{x_t}\left(\int_t^1 \frac{1}{2} \| u_s^\theta(X^u_s)\|^2 ds + g(X^u_1)\right)
\label{eq:a_t}
\end{equation}

\subsubsection{Derivation of the Basic Adjoint Matching Objective}
Now we derive the basic adjoint matching objective
\begin{equation}
    \mathcal{L}_{\text{basic-AM}}(u^\theta; \bm{X}^{\bar{u}}) = \frac{1}{2} \int_0^1 \|u_t^\theta(X^{\bar{u}}_t) + a_t(\bm{X}^{\bar{u}},\bar{u})\|^2 dt, \quad\quad \bm{X}^{\bar{u}} \sim p^u, \,\, \bar{u}=\texttt{stopgrad}(u^\theta)
\label{eq:basic_AM_objective}
\end{equation}
The trajectories are generated by the neural network with fixed parameters ($\bar{u}$), detached from gradient computation during simulation. This avoids retaining the full simulation computation graph. It does not remove the costs of trajectory generation and network updates.

To derive this objective, we start from the optimal control objective
\begin{equation}
    \mathcal{L}(u) = \int_0^1 \frac{1}{2} \| u_t^\theta(X^u_t)\|^2 dt + g(X^u_1)
\end{equation}
We follow a similar procedure by adding the dynamics constraint with the adjoint state $a_t(\bm{X}^{\bar{u}},\bar{u})$ as its multiplier. Here, $a_t(\bm{X}^{\bar{u}},\bar{u})$ is independent of $\theta$.
\begin{align}
    \mathcal{L}(u) &= \int_0^1 \frac{1}{2} \| u_t^\theta(X^u_t)\|^2 dt + g(X^u_1) - \int_0^1 a_t^\top \left( dX^u_t - u_t^\theta(X_t^u)dt - \sigma_t dW_t \right) \\
    &= \int_0^1 \frac{1}{2} \| u_t^\theta(X^u_t)\|^2 dt + g(X^u_1) - a_1^\top X^u_1 + a_0^\top X^u_0 \notag \\
    & \quad + \int_0^1 (X_t^u)^\top \left( \frac{da_t}{dt} \right) dt + \int_0^1 a_t^\top u_t^\theta(X_t^u)dt + \int_0^1 \sigma_t a_t^\top dW_t
\end{align}
Differentiating with respect to the neural network parameter $\theta$ gives
\begin{align}
    \frac{d \mathcal{L}(u)}{d\theta} &= \frac{d}{d\theta}\Bigg( \int_0^1 \frac{1}{2} \| u_t^\theta(X^u_t)\|^2 dt + g(X^u_1) - a_1^\top X^u_1 + a_0^\top X^u_0 \notag \\
    & \quad + \int_0^1 (X_t^u)^\top \left( \frac{da_t}{dt} \right) dt + \int_0^1 a_t^\top u_t^\theta(X_t^u)dt + \int_0^1 \sigma_t a_t^\top dW_t \Bigg) \\
    &= \frac{d}{d\theta}\Bigg( \int_0^1 \frac{1}{2} \| u_t^\theta(X^u_t)\|^2 dt + g(X^u_1) - a_1^\top X^u_1 \notag \\
    & \quad + \int_0^1 (X_t^u)^\top \left( \frac{da_t}{dt} \right) dt + \int_0^1 a_t^\top u_t^\theta(X_t^u)dt \Bigg)
\end{align}

Here, $X_0$ is fixed and therefore independent of $\theta$.
Differentiating the first term gives
\begin{align}
    \frac{d}{d\theta}\left(\int_0^1\frac{1}{2}\|u_t^\theta(X_t^u)\|^2dt\right)
    &=\int_0^1\left(\frac{\partial u_t^\theta}{\partial\theta}(X_t^{\bar{u}})
    +\nabla_xu_t^\theta(X_t^{\bar{u}})\frac{\partial X_t^u}{\partial\theta}\right)^\top u_t^\theta(X_t^{\bar{u}})dt\\
    &=\int_0^1\left(\frac{\partial u_t^\theta}{\partial\theta}(X_t^{\bar{u}})\right)^\top u_t^\theta(X_t^{\bar{u}})
    +\left(\frac{\partial X_t^u}{\partial\theta}\right)^\top\nabla_x\left(\frac{1}{2}\|u_t^\theta(X_t^{\bar{u}})\|^2\right)dt.
\end{align}
The $X_1$ terms are
\begin{equation}
    \frac{d}{d\theta}\left(g(X_1^u)-a_1^\top X_1^u\right)
    =\left(\frac{\partial X_1^u}{\partial\theta}\right)^\top\left(\nabla_xg(X_1^{\bar{u}})-a_1\right)=0
    \qquad\left(\text{by \cref{eq:adjoint-terminal}}\right).
\end{equation}
The last two integrals are
\begin{align}
    &\frac{d}{d\theta}\left(\int_0^1(X_t^u)^\top\frac{da_t}{dt}dt+\int_0^1a_t^\top u_t^\theta(X_t^u)dt\right)\notag\\
    &=\int_0^1\left(\frac{\partial u_t^\theta}{\partial\theta}(X_t^{\bar{u}})\right)^\top a_tdt
    +\int_0^1\left(\frac{\partial X_t^u}{\partial\theta}\right)^\top
    \left(\frac{da_t}{dt}+\left(\nabla_xu_t^\theta(X_t^{\bar{u}})\right)^\top a_t\right)dt.
\end{align}
Gathering all the terms, we have
\begin{align}
    \frac{d\mathcal{L}(u)}{d\theta}
    &=\int_0^1\left(\frac{\partial u_t^\theta}{\partial\theta}(X_t^{\bar{u}})\right)^\top
    \left(u_t^\theta(X_t^{\bar{u}})+a_t\right)dt\notag\\
    &\quad+\int_0^1\left(\frac{\partial X_t^u}{\partial\theta}\right)^\top
    \left(\frac{da_t}{dt}+\left(\nabla_xu_t^\theta(X_t^{\bar{u}})\right)^\top a_t
    +\nabla_x\left(\frac{1}{2}\|u_t^\theta(X_t^{\bar{u}})\|^2\right)\right)dt\\
    &=\int_0^1\left(\frac{\partial u_t^\theta}{\partial\theta}(X_t^{\bar{u}})\right)^\top
    \left(u_t^\theta(X_t^{\bar{u}})+a_t\right)dt
    \qquad\left(\text{by \cref{eq:adjoint-ode}}\right).\label{eq:first_stopgrad}
\end{align}
Throughout this derivation, the $\frac{\partial X_t^u}{\partial \theta}$ terms account for the dependence of trajectories on $\theta$. These terms cancel by the adjoint equations. The remaining terms are evaluated using trajectories generated by $\bar{u} = \texttt{stopgrad}(u)$.
We can rewrite \cref{eq:first_stopgrad} as
\begin{align}
    \frac{d \mathcal{L}(u)}{d\theta} &= \int_0^1 \frac{\partial}{\partial \theta}\left( \frac{1}{2} \| u^\theta_t(X_t^{\bar{u}})\|^2 \right) + \frac{\partial}{\partial \theta} \left(a_t^\top u_t^\theta(X_t^{\bar{u}})\right) dt \\
    &= \frac{d}{d\theta} \left( \int_0^1  \frac{1}{2} \| u^\theta_t(X_t^{\bar{u}})\|^2 + a_t^\top u_t^\theta(X_t^{\bar{u}}) dt \right) \\
    &= \frac{d}{d\theta} \left( \int_0^1  \frac{1}{2} \| u^\theta_t(X_t^{\bar{u}})\|^2 + a_t^\top u_t^\theta(X_t^{\bar{u}}) + \frac{1}{2} \|a_t\|^2 dt \right) \quad (\because a_t \text{ is independent of } \theta)  \\
    &= \frac{d}{d\theta} \left( \int_0^1  \frac{1}{2} \| u^\theta_t(X_t^{\bar{u}}) + a_t(\bm{X}^{\bar{u}}, \bar{u}) \|^2 dt \right)
\end{align}
Thus, the basic AM objective in \cref{eq:basic_AM_objective} can be used to optimize $u^\theta_t$ using gradient descent.

\subsubsection{Connection to Optimal Control, the Lean Adjoint State, and the Adjoint Matching Objective}
The pathwise adjoint system applies to any sufficiently regular admissible control. We show the consistency of the functional stationarity condition with a verified SOC optimum. During the following variation, trajectories and adjoints are held fixed, and the perturbation is a state-feedback function $h_t=h_t(X_t)$.
The critical-point condition for adjoint matching follows from the first variation of
$\mathbb{E}_{p^u}[\mathcal{L}_{\text{basic-AM}}]$.
\begin{align}
    \frac{\delta \mathbb{E}_{p^u}[\mathcal{L}_{\text{basic-AM}}]}{\delta u} &= \left.\frac{d}{d\epsilon}\left( \mathbb{E}_{p^u}\left[ \frac{1}{2} \int_0^1 \|(u_t^\theta + \epsilon h_t)(X^{\bar{u}}_t) + a_t(\bm{X}^{\bar{u}})\|^2 dt \right] \right)\right|_{\epsilon=0} \\
    &= \mathbb{E}_{p^u}\left[ \int_0^1 h_t^\top(u_t^\theta(X^{\bar{u}}_t) + a_t(\bm{X}^{\bar{u}})) dt \right] \\
    &= \mathbb{E}_{p^u}\left[ \int_0^1 h_t^\top(u_t^\theta(X^{\bar{u}}_t) + \mathbb{E}_{\bm{X}\sim p^u}[a_t(\bm{X}^{\bar{u}})| X_t^{\bar{u}}] ) dt \right]
\end{align}
The last equality follows from the tower property, $\mathbb{E}_X[X] = \mathbb{E}_Y[\mathbb{E}_X[X|Y]]$. Thus, if $u^*$ satisfies the following condition, then $u^*$ is a critical point of the adjoint matching objective.
\begin{equation}
    u_t^*(x_t) = -\mathbb{E}_{\bm{X}^*\sim p^{u^*}}[a_t(\bm{X}^{u^*})| X_t^{u^*}=x_t]
\label{eq:AM-critical}
\end{equation}
For a verified SOC optimum $u^*$, the conditional expected cost-to-go equals $V_t$. Using \cref{eq:a_t} and interchanging differentiation and expectation gives
\begin{align}
    u_t^*(x_t)
    &=-\mathbb{E}_{p^{u^*}}\left[a_t(\bm{X}^{u^*})\mid X_t^{u^*}=x_t\right]\\
    &=-\mathbb{E}_{p^{u^*}}\left[\nabla_{x_t}\mathcal{J}(u^*,x_t)\mid X_t^{u^*}=x_t\right]\\
    &=-\nabla_{x_t}\mathbb{E}_{p^{u^*}}\left[\mathcal{J}(u^*,x_t)\mid X_t^{u^*}=x_t\right]\\
    &=-\nabla_{x_t}V_t(x_t).
\end{align}
Thus the SOC optimum satisfies the functional stationarity condition. This calculation does not establish global optimality of an arbitrary stationary point in neural-network parameters.

We can further simplify the adjoint system to the \textit{lean adjoint system}.
Multiplying \cref{eq:AM-critical} by $\left(\nabla_xu_t^*(x_t)\right)^\top$, we have
\begin{align}
    &\left(\nabla_xu_t^*(x_t)\right)^\top u_t^*(x_t)
    =-\mathbb{E}_{p^{u^*}}\left[\left(\nabla_xu_t^*(x_t)\right)^\top a_t(\bm{X}^{u^*})\mid X_t^{u^*}=x_t\right]\\
    \iff&\mathbb{E}_{p^{u^*}}\left[-\nabla_x\left(\frac{1}{2}\|u_t^*(x_t)\|^2\right)
    -\left(\nabla_xu_t^*(x_t)\right)^\top a_t(\bm{X}^{u^*})\mid X_t^{u^*}=x_t\right]=0.
\end{align}
The adjoint ODE has zero conditional mean drift at the SOC optimum. This motivates the lean adjoint below; it does not imply that each pathwise adjoint is constant. The functional critical-point result for lean Adjoint Matching is provided by \citet{domingo2024adjoint}, Proposition 7, under its stated assumptions.
\begin{subequations}
\begin{empheq}[left={\empheqlbrace}]{align}
\frac{d \tilde{a}_t}{dt} &= 0 \label{eq:lean-adjoint-ode} \\
\tilde{a}_1 &= \nabla_x g(X_1^u) \label{eq:lean-adjoint-terminal}
\end{empheq}
\label{eq:lean-adjoint-system}
\end{subequations}
where $\tilde{a}$ is called the \textit{lean adjoint state}. The lean adjoint state is no longer equivalent to the gradient of the cost-to-go function. Since the base SDE has no drift and the state cost is zero, the lean adjoint state is constant in time, with $\tilde{a}_t = \nabla_x g(X_1^{u})$.
The resulting adjoint matching objective is
\begin{align}
\begin{split}
    \mathcal{L}_{\text{AM}}(u^\theta; \bm{X}^{\bar{u}}) &= \frac{1}{2} \int_0^1 \|u_t^\theta(X^{\bar{u}}_t) + \tilde{a}_t(\bm{X}^{\bar{u}})\|^2 dt = \frac{1}{2} \int_0^1 \|u_t^\theta(X^{\bar{u}}_t) + \nabla_xg(X_1^{\bar{u}})\|^2 dt \\
    & \text{where} \quad \bm{X}^{\bar{u}} \sim p^u, \quad \bar{u}=\texttt{stopgrad}(u^\theta)
\end{split}
\label{eq:lean_AM_objective}
\end{align}
Replacing the terminal cost by $\rho g$ and taking expectation over trajectories gives \cref{eq:AM_objective}, up to the constant factor $1/2$.
The final adjoint matching algorithm is outlined in \cref{appendix:alg:AM}.

\begin{algorithm}[!htbp]
\caption{Adjoint Matching~\citep{domingo2024adjoint}}
\textbf{Require:} Parameterized drift $u^\theta$, differentiable terminal cost $g(x)$, nominal distribution $\hat{\mu}$, target distribution $\nu$, replay buffer $\mathcal{B}$, number of epochs $E$.

\textbf{Initialize:} $u^\theta$

\begin{algorithmic}[1]
\FOR{epoch in $1,2, \ldots, E$}
    \STATE Sample $\{\hat{X}^i_0\}_{i=1}^M \sim \hat{\mu}$
    \STATE Sample $\{t^i\}_{i=1}^M \sim \mathcal{U}[0,1]$.
    \STATE Simulate (\ref{eq:drsb_dyn}) to get $\{(\tilde{X}^i_0, t^i, X_t^i, X_1^i)\}_{i=1}^M$.
    \STATE Update replay buffer $\mathcal{B} \leftarrow \mathcal{B} \cup \{ (t^i , X^i_t,\, X_1^i)\}_{i=1}^M$
    \STATE Take gradient steps $\nabla_\theta \mathcal{L}_{\text{AM}}$ to minimize $\mathcal{L}_{\text{AM}}(\theta) = \mathbb{E}_{(t, X_t, X_1) \sim \mathcal{B}} \left[ \| u^\theta(t, X_t) + \nabla g(X_1) \|^2 \right]$.
\ENDFOR
\end{algorithmic}
\label{appendix:alg:AM}
\end{algorithm}

\clearpage
\section{Full DRSB Algorithm}
\label{appendix:full_algorithm}

The main DRSB training loop is given in \cref{alg:app_drsb}.
Each iteration consists of three steps: worst-case distribution update, terminal cost network update, and control network update.
DRSB-W and DRSB-S differ only in the worst-case subroutine.
\cref{alg:app_wasserstein_worst_case} details the Wasserstein worst-case optimization, which iteratively updates each sample via gradient ascent (\cref{eq:worst_case_gd}) and adjusts $\lambda$ via gradient descent (\cref{eq:lambda_update_rule}).
\cref{alg:app_sinkhorn_worst_case} details the Sinkhorn worst-case optimization, which computes $\lambda^*$ in closed form (\cref{eq:sdrsb_opt_lambda}) and samples from the Gaussian worst-case distribution (\cref{eq:sink_gaussian_worst}).
\begin{algorithm}[!htbp]
\small
\caption{Distributionally Robust Schr\"{o}dinger Bridge}
\textbf{Require:} Parameterized drift $u^\theta$, parameterized terminal cost $g^\phi$, nominal distribution $\hat{\mu}$, target distribution $\nu$, replay buffer $\mathcal{B}$, number of epochs $E$.

\textbf{Initialize:} $u^\theta$ with a pretrained SB model from $\hat{\mu}$ to $\nu$, $g^\phi$, $\lambda$

\begin{algorithmic}[1]
\FOR{epoch in $1,2, \ldots, E$}
    \STATE $\{\tilde{X}^i_0\}_{i=1}^M, \lambda \leftarrow$ \texttt{WassersteinWorstCase}($u^\theta, \lambda)$ (\cref{alg:app_wasserstein_worst_case})  \textit{or} \\
    $\{\tilde{X}^i_0\}_{i=1}^M \leftarrow$ \texttt{SinkhornWorstCase}($u^\theta)$ \\(\cref{alg:app_sinkhorn_worst_case})
    \STATE Sample $\{t^i\}_{i=1}^M \sim \mathcal{U}[0,1]$.
    \STATE Simulate (\ref{eq:drsb_dyn}) to get $\{(\tilde{X}^i_0, t^i, X_t^i, X_1^i)\}_{i=1}^M$.
    \STATE Update replay buffer $\mathcal{B} \leftarrow \mathcal{B} \cup \{ (t^i , X^i_t,\, X_1^i)\}_{i=1}^M$
    \STATE Take gradient steps $\nabla_\phi \mathcal{L}_{D}$ to minimize \\
    $\mathcal{L}_D(\phi)= \mathbb{E}_{X_1 \sim \mathcal{B}}\left[\log(1 + e^{-g^\phi(X_1)})\right] + \mathbb{E}_{X_1 \sim \nu}\left[\log (1 + e^{g^\phi(X_1)})\right]$
    \STATE Take gradient steps $\nabla_\theta \mathcal{L}_{\text{AM}}$ to minimize \\ $\mathcal{L}_{\text{AM}}(\theta) = \mathbb{E}_{(t, X_t, X_1) \sim \mathcal{B}} \left[ \| u^\theta(t, X_t) + \rho \nabla g^\phi(X_1) \|^2 \right]$.
\ENDFOR
\end{algorithmic}
\label{alg:app_drsb}
\end{algorithm}

\begin{algorithm}[!htbp]
\small
\caption{Wasserstein worst-case optimization}
\textbf{Require:} Minibatch size $\beta$, number of $\lambda$ updates $L$, worst-case update threshold $\tau$.

\textbf{Function }\texttt{WassersteinWorstCase}($u^\theta, \lambda$)

\begin{algorithmic}[1]
\STATE Prepare result set $\mathcal{R} = \{\}$
\FOR{$\lambda$ update iteration $1,2,\ldots, L$}
    \STATE Sample $\{\hat{X}^j_0\}_{j=1}^\beta \sim \hat{\mu}$
    \STATE $\{\tilde{X}^j_0\}_{j=1}^\beta \leftarrow \{\hat{X}^j_0\}_{j=1}^\beta$
    \REPEAT
        \STATE $\{X^j_0\}_{j=1}^\beta \leftarrow \{\tilde{X}^j_0\}_{j=1}^\beta$
        \STATE $\left\{ \tilde{X}^j_0 \leftarrow X^j_0 + \eta \left[ - u^\theta(0, X^j_0) - 2\lambda (X^j_0 - \hat{X}_0^j) \right] \right\}_{j=1}^{\beta}$
    \UNTIL{$\frac{1}{\beta}\sum_{j=1}^\beta \   \|\tilde{X}^j_0 - X^j_0\|^2 < \tau$.}
    \STATE Update $\lambda \leftarrow \lambda - \zeta \left[ \varepsilon - \frac{1}{\beta}\sum_{j=1}^\beta \   \|\tilde{X}^j_0 - \hat{X}^j_0\|^2 \right]$
    \STATE Save $\mathcal{R} \leftarrow \mathcal{R}\cup \{\tilde{X}^j_0\}_{j=1}^\beta$
\ENDFOR
\end{algorithmic}
\textbf{Return:} $\mathcal{R}=\{\tilde{X}^i_0\}_{i=1}^{L\beta}$, $\lambda$
\label{alg:app_wasserstein_worst_case}
\end{algorithm}

\begin{algorithm}[!htbp]
\small
\caption{Sinkhorn worst-case optimization}
\textbf{Require:} Batch size $B$, number of worst-case samples per clean sample $w$. \\
\textbf{Function} \texttt{SinkhornWorstCase}($u^\theta$)

\begin{algorithmic}[1]
\STATE Sample $\{\hat{X}^j_0\}_{j=1}^B \sim \hat{\mu}$
\STATE Compute $\lambda \leftarrow \sqrt{\left( \varepsilon + \frac{\kappa d}{2} \log \pi \kappa \right)^{-1} \frac{1}{B}\sum_{j=1}^B \frac{\| u^{\theta}(0, \hat{X}_0^j) \|^2}{4}}$
\STATE Sample $\left\{ \{\tilde{X}^l_0\}_{l=1}^w \sim \mathcal{N} \left( \hat{X}^j_0 - \frac{u^{\theta}_0(\hat{X}^j_0)}{2\lambda}, \frac{\kappa}{2} I \right) \right\}_{j=1}^B$
\end{algorithmic}
\textbf{Return:} $\{\tilde{X}^i_0\}_{i=1}^{wB}$
\label{alg:app_sinkhorn_worst_case}
\end{algorithm}

\FloatBarrier
\section{Implementation Details}
\label{appendix:imple_details}

All experiments are conducted on a single NVIDIA RTX 4090 GPU.
The drift network $u^\theta$ and the terminal cost network $g^\phi$ are both parameterized as multi-layer perceptrons.
For the 2D transport tasks (Gaussian and GMM), the baseline SB model is pretrained with the DSBM algorithm~\citep{shi2023diffusion}, and the DRSB training is initialized from this pretrained model following the successive approximation scheme in \cref{alg:drsb_overview}.
For the FFHQ image translation task, the model operates in the 512-dimensional latent space of the ALAE autoencoder~\citep{pidhorskyi2020adversarial}.

\subsection{Practical Techniques for Adversarial Training}

The minimax structure of DRSB can make training unstable, particularly in high-dimensional settings.
The following techniques stabilize the adversarial optimization.

\paragraph{Gradual adversarial training.}
The adversary in DRSB is the worst-case optimization, and its penalty is controlled by the dual variable $\lambda$.
Gradually decreasing $\lambda$ from a large initial value increases the adversarial shift slowly.
In DRSB-W, this occurs naturally through the $\lambda$ gradient descent steps (\cref{fig:lambda_opt}).
In DRSB-S, the closed-form $\lambda^*$ can produce extreme worst-case shifts from the start.
To mitigate this, we linearly increase the ambiguity set radius $\varepsilon$ over the first few iterations (\cref{fig:eps_schedule}).

\begin{figure}[htbp]
    \centering
    \begin{subfigure}[b]{0.4\linewidth}
        \centering
        \includegraphics[width=\linewidth]{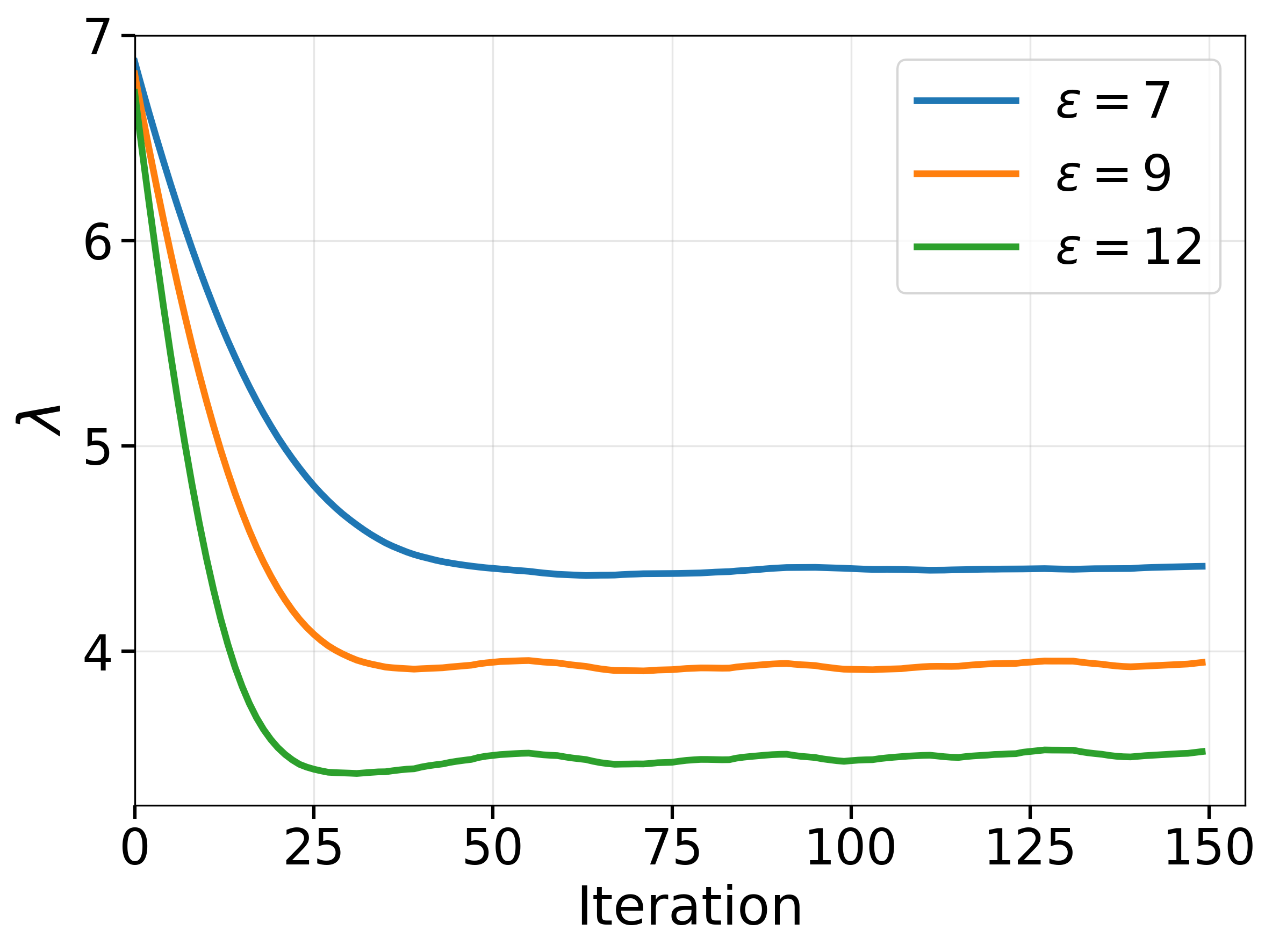}
        \caption{DRSB-W $\lambda$ evolution.}
        \label{fig:lambda_opt}
    \end{subfigure}
    \hfill
    \begin{subfigure}[b]{0.4\linewidth}
        \centering
        \includegraphics[width=\linewidth]{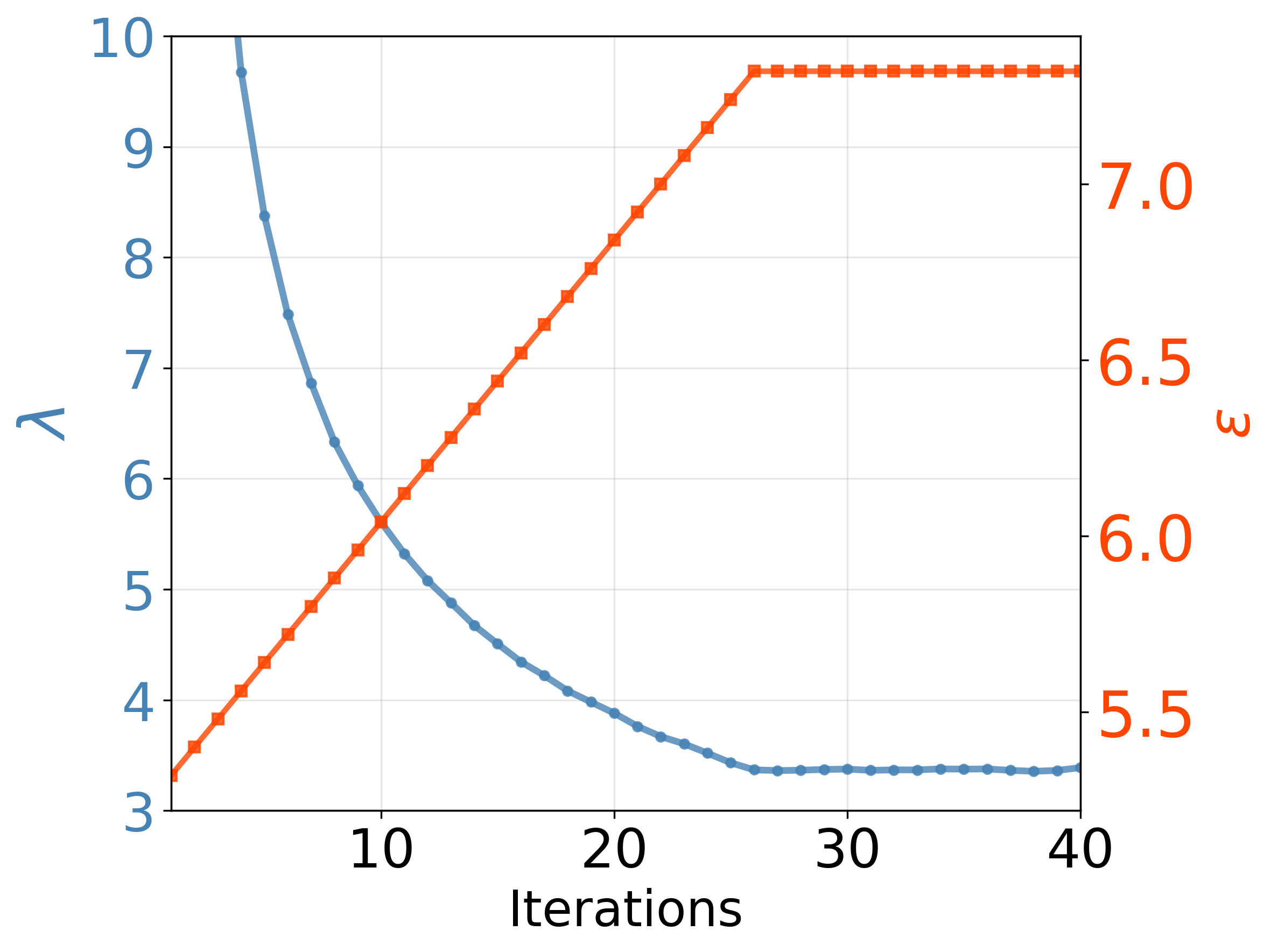}
        \caption{DRSB-S $\varepsilon$ schedule.}
        \label{fig:eps_schedule}
    \end{subfigure}
    \caption{Training dynamics. (a) Evolution of $\lambda$ in DRSB-W. (b) $\varepsilon$-schedule for DRSB-S and corresponding $\lambda$ evolution.}
    \label{fig:training_dynamics}
\end{figure}

\paragraph{Pretraining the terminal cost network $g^\phi$.}
Consistent with findings in robust control~\citep{bea1998successive}, proper initialization is critical for convergence in adversarial training.
We pretrain $g^\phi$ using the pretrained SB drift $\bar{u}$ by minimizing the joint objective $\mathcal{L}_D(\phi) + \xi \mathcal{L}_{\text{AM}}(\phi)$, where $\xi$ is a weight hyperparameter.
Generated samples for $\mathcal{L}_D(\phi)$ are obtained from $\bar{u}$, and $\mathcal{L}_{\text{AM}}(\phi)$ regresses $\rho \nabla g^\phi(X_1)$ to the negative drift $-\bar{u}(t, X_t)$.

\paragraph{Regularization with a pretrained SB model.}
Worst-case samples can push the controller away from the correct data manifold, leading to semantic degradation.
Since the pretrained SB control $\bar{u}$ captures the data manifold direction, we regularize $u^\theta$ to remain close to $\bar{u}$ via the loss $\mathcal{L}_{\text{AM}}(\theta) + \alpha \mathbb{E}[\|u^\theta - \bar{u}\|^2]$, where $\alpha$ is a weight hyperparameter.

\Needspace{26\baselineskip}
\subsection{Sensitivity to \texorpdfstring{$\kappa$}{kappa} in DRSB-S}

The entropy regularization parameter $\kappa$ in the Sinkhorn distance controls the support of the worst-case distribution.
As $\kappa \to 0$, the Sinkhorn distance approaches the Wasserstein distance, and the ambiguity set preserves geometric interpretability.
As $\kappa$ increases, higher-entropy couplings $\gamma$ are favored, and the worst-case distribution has wider support.

Training stability is sensitive to $\kappa$.
From \cref{eq:sink_gaussian_worst}, larger $\kappa$ produces a more diffuse worst-case distribution.
If $\kappa$ is too large, training becomes unstable due to extreme deviation from the nominal samples.

In high-dimensional settings, $\kappa$ interacts with the dimension $d$ through the optimal dual variable in \cref{eq:sdrsb_opt_lambda}.
The term $h(\kappa, \varepsilon) := \varepsilon + \frac{\kappa d}{2}\log(\pi\kappa)$ must satisfy $h(\kappa, \varepsilon) > 0$ for $\lambda^*$ to be well-defined.
If $h(\kappa, \varepsilon)$ is too large, $\lambda^*$ becomes small and the worst-case distribution deviates excessively from the nominal in \cref{eq:sink_gaussian_worst}.
Setting $\kappa$ near $0$ or $1/\pi$ eliminates the $\log(\pi\kappa)$ contribution and reduces the sensitivity to $d$.

\cref{tab:kappa_sensitivity} reports FID scores on the FFHQ Woman$\to$Man task for two values of $\kappa$ and several values of the ambiguity set radius $\varepsilon$.
Both $\kappa = 0.005$ and $\kappa = 0.32$ yield stable training and robustness to input noise.
In the FFHQ task with $d=512$, values of $\kappa$ far from $0$ or $1/\pi$ (e.g., $\kappa = 0.25$) or $\varepsilon > 30$ cause training to diverge.

\begin{table}[htbp]
\centering
\caption{FID ($\downarrow$) sensitivity to $\kappa$ and $\varepsilon$ on FFHQ Woman$\to$Man under noise $\delta$.}
\resizebox{0.8\linewidth}{!}{
\begin{tabular}{lcccc}
\toprule
\textbf{Model} & Org & $\delta=0.25$ & $0.50$ & $1.00$ \\
\midrule
DSBM & \textbf{25.55}$\pm$0.50 & 26.59$\pm$0.48 & 26.15$\pm$0.49 & 26.81$\pm$0.52 \\
\midrule
DRSB-S $\kappa$=0.005, $\varepsilon$=9 & 26.76$\pm$0.49 & 26.28$\pm$0.44 & 26.60$\pm$0.52 & 27.15$\pm$0.49 \\
DRSB-S $\kappa$=0.005, $\varepsilon$=12 & 26.36$\pm$0.48 & 26.48$\pm$0.48 & 26.10$\pm$0.50 & 26.66$\pm$0.52 \\
DRSB-S $\kappa$=0.005, $\varepsilon$=15 & 26.08$\pm$0.47 & 26.57$\pm$0.50 & 26.35$\pm$0.53 & 26.40$\pm$0.53 \\
\midrule
DRSB-S $\kappa$=0.32, $\varepsilon$=7 & 26.70$\pm$0.48 & 25.93$\pm$0.48 & 26.05$\pm$0.47 & 26.05$\pm$0.50 \\
DRSB-S $\kappa$=0.32, $\varepsilon$=12 & 26.52$\pm$0.50 & \textbf{25.82}$\pm$0.49 & \textbf{25.98}$\pm$0.49 & 25.94$\pm$0.47 \\
DRSB-S $\kappa$=0.32, $\varepsilon$=15 & 26.26$\pm$0.50 & 25.88$\pm$0.44 & 26.51$\pm$0.52 & \textbf{25.89}$\pm$0.49 \\
\bottomrule
\end{tabular}
}
\label{tab:kappa_sensitivity}
\end{table}

\clearpage
\section{Additional Experiments}
\label{appendix:additional_experiments}

The main text reports selected perturbation conditions in \cref{tab:2d_robustness} and \cref{tab:wdrsb_vs_sdrsb}.
This section provides the full results across all noise levels and shift directions.

\subsection{Noise Robustness}

\cref{tab:app_noise} reports SWD under $L_2$-norm noise at eight levels ($\delta = 0.5$ to $4.0$) for both tasks.
DSBM performs best at low noise in the Gaussian task, while DRSB-W and DRSB-S outperform DSBM at higher noise levels.
In the GMM task, DRSB-S outperforms both DSBM and DRSB-W at all noise levels except $\delta=0.5$.

\begin{table}[htbp]
\centering
\caption{SWD ($\downarrow$) on 2D transport tasks under varying $L_2$-norm noise levels $\delta$ (mean $\pm$ std over 10 seeds). \textbf{Bold} indicates best performance; \colorbox[gray]{0.9}{gray} indicates outperforming the baseline DSBM.}
\resizebox{\linewidth}{!}{
\begin{tabular}{lcccccccc}
\toprule
\textbf{Model} & $\delta$=0.5 & 1.0 & 1.5 & 2.0 & 2.5 & 3.0 & 3.5 & 4.0 \\
\midrule
\multicolumn{9}{c}{\textit{(a) Gaussian-to-Gaussian}} \\
\midrule
DSBM & \textbf{0.052}$\pm$0.006 & \textbf{0.072}$\pm$0.009 & \textbf{0.128}$\pm$0.012 & \textbf{0.214}$\pm$0.010 & \textbf{0.313}$\pm$0.015 & 0.431$\pm$0.012 & 0.555$\pm$0.013 & 0.684$\pm$0.013 \\
DRSB-W & 0.080$\pm$0.010 & 0.093$\pm$0.012 & 0.138$\pm$0.013 & 0.216$\pm$0.013 & 0.315$\pm$0.012 & \cellcolor[gray]{0.9}{0.425}$\pm$0.013 & \cellcolor[gray]{0.9}{0.548}$\pm$0.013 & \cellcolor[gray]{0.9}{0.678}$\pm$0.013 \\
DRSB-S & 0.131$\pm$0.011 & 0.134$\pm$0.014 & 0.165$\pm$0.015 & 0.227$\pm$0.016 & 0.314$\pm$0.015 & \textbf{0.414}$\pm$0.015 & \textbf{0.522}$\pm$0.015 & \textbf{0.638}$\pm$0.016 \\
\midrule
\multicolumn{9}{c}{\textit{(b) GMM-to-GMM}} \\
\midrule
DSBM & 1.164$\pm$0.137 & 1.287$\pm$0.112 & 1.597$\pm$0.077 & 1.842$\pm$0.086 & 2.051$\pm$0.116 & 2.175$\pm$0.118 & 2.254$\pm$0.108 & 2.407$\pm$0.243 \\
DRSB-W & \textbf{1.096}$\pm$0.216 & \cellcolor[gray]{0.9}{1.220}$\pm$0.172 & \cellcolor[gray]{0.9}{1.547}$\pm$0.114 & \cellcolor[gray]{0.9}{1.821}$\pm$0.123 & \cellcolor[gray]{0.9}{1.985}$\pm$0.111 & \cellcolor[gray]{0.9}{2.139}$\pm$0.128 & \cellcolor[gray]{0.9}{2.222}$\pm$0.166 & \cellcolor[gray]{0.9}{2.315}$\pm$0.236 \\
DRSB-S & 1.191$\pm$0.195 & \textbf{1.150}$\pm$0.206 & \textbf{1.293}$\pm$0.145 & \textbf{1.636}$\pm$0.157 & \textbf{1.836}$\pm$0.127 & \textbf{1.953}$\pm$0.095 & \textbf{2.013}$\pm$0.118 & \textbf{2.212}$\pm$0.413 \\
\bottomrule
\end{tabular}
}
\label{tab:app_noise}
\end{table}

\subsection{Shift Robustness}

\cref{tab:app_shift} reports SWD under directional distribution shifts and worst-case inputs.
DRSB-S achieves the lowest SWD under most shift directions in both tasks.
DRSB-W outperforms DSBM on the original GMM distribution, suggesting that the worst-case update can act as a regularizer on complex manifolds.

\begin{table}[htbp]
\centering
\caption{SWD ($\downarrow$) on 2D transport tasks under distribution shifts (mean $\pm$ std over 10 seeds). Org: original distribution; Up/Down/Left/Right: directional shift; W-wst: DRSB-W worst case; S-wst: DRSB-S worst case. \textbf{Bold} indicates best performance; \colorbox[gray]{0.9}{gray} indicates outperforming the baseline DSBM.}
\resizebox{\linewidth}{!}{%
\begin{tabular}{lccccccc}
\toprule
\textbf{Model} & Org & Up & Down & Left & Right & W-wst & S-wst \\
\midrule
\multicolumn{8}{c}{\textit{(a) Gaussian-to-Gaussian}} \\
\midrule
DSBM & \textbf{0.038}$\pm$0.007 & 0.665$\pm$0.012 & 0.673$\pm$0.020 & 0.666$\pm$0.011 & \textbf{0.651}$\pm$0.009 & 1.081$\pm$0.021 & 0.951$\pm$0.013 \\
DRSB-W & 0.144$\pm$0.007 & 0.691$\pm$0.020 & 0.698$\pm$0.020 & \cellcolor[gray]{0.9}{0.517}$\pm$0.014 & 0.792$\pm$0.014 & \cellcolor[gray]{0.9}{0.972}$\pm$0.007 & \cellcolor[gray]{0.9}{0.812}$\pm$0.012 \\
DRSB-S & 0.123$\pm$0.012 & \textbf{0.647}$\pm$0.018 & \textbf{0.640}$\pm$0.014 & \textbf{0.501}$\pm$0.014 & 0.769$\pm$0.016 & \textbf{0.881}$\pm$0.018 & \textbf{0.710}$\pm$0.013 \\
\midrule
\multicolumn{8}{c}{\textit{(b) GMM-to-GMM}} \\
\midrule
DSBM & 0.902$\pm$0.242 & 6.167$\pm$0.233 & 5.798$\pm$0.245 & 5.917$\pm$0.127 & 5.917$\pm$0.132 & 2.086$\pm$0.166 & 1.988$\pm$0.100 \\
DRSB-W & \textbf{0.885}$\pm$0.226 & \cellcolor[gray]{0.9}{5.949}$\pm$0.262 & 5.865$\pm$0.282 & 5.967$\pm$0.153 & \cellcolor[gray]{0.9}{5.785}$\pm$0.120 & \textbf{1.503}$\pm$0.052 & \cellcolor[gray]{0.9}{1.946}$\pm$0.100 \\
DRSB-S & 1.165$\pm$0.138 & \textbf{5.887}$\pm$0.247 & \textbf{5.698}$\pm$0.203 & \textbf{5.840}$\pm$0.095 & \textbf{5.675}$\pm$0.185 & \cellcolor[gray]{0.9}{1.791}$\pm$0.102 & \textbf{1.478}$\pm$0.100 \\
\bottomrule
\end{tabular}
}
\label{tab:app_shift}
\end{table}

\clearpage
\section{Worst-Case Cost Analysis}
\label{sec:worst_case_analysis}

\subsection{Worst-Case Cost Interpretation}
\label{sec:worst_case_interpretation}

\begin{remark}[Fixed-terminal-cost bound]
Fix a controller $u$ and terminal cost $g$. If $\tilde{\mu}$ globally maximizes $\mathbb{E}_{X_0\sim\mu}[V(u,X_0;g)]$ over $\mathfrak{M}_\epsilon(\hat{\mu})$, then
\begin{equation}
\forall\mu\in\mathfrak{M}_\epsilon(\hat{\mu}),\qquad
\mathbb{E}_{X_0\sim\mu}[V(u,X_0;g)]
\leq\mathbb{E}_{X_0\sim\tilde{\mu}}[V(u,X_0;g)].
\label{eq:cost_bound}
\end{equation}
This is a direct consequence of global inner optimality. It does not require an optimal controller or imply that the ambiguity constraint is active. Approximate adversarial updates need not attain this bound. Since $g$ is fixed, the inequality is not a certificate for the full objective with terminal KL divergence.
\end{remark}

Applying this bound to a test distribution requires verifying its membership in the ambiguity set. For empirical training and test distributions, the Wasserstein distance is well defined and can be computed from their samples. With the squared Euclidean transport cost, the Wasserstein ambiguity set constrains the average squared displacement from the nominal distribution.

\subsection{Empirical Cost Comparison}
\label{sec:empirical_cost_comparison}

We compare the measured costs of the computed adversarial distribution, four shifted distributions (Up, Down, Left, Right), and the nominal distribution within the Wasserstein ambiguity set. For fixed $u^\theta$ and $g^\phi$, we estimate $\mathbb{E}[\sum_t \tfrac12\|u_t\|^2\Delta t+\rho g^\phi(X_1)]$ over 1000 uniform time steps. In \cref{fig:cost_bound}, the computed adversarial distribution has the largest measured cost among these distributions. This finite comparison does not verify global inner optimality or certify the full objective with terminal KL divergence.

The control cost in \cref{fig:cost_bound} saturates as $\rho$ increases, while the terminal-cost term receives greater weight.

\begin{figure}[htbp]
\centering
\includegraphics[width=0.54\linewidth]{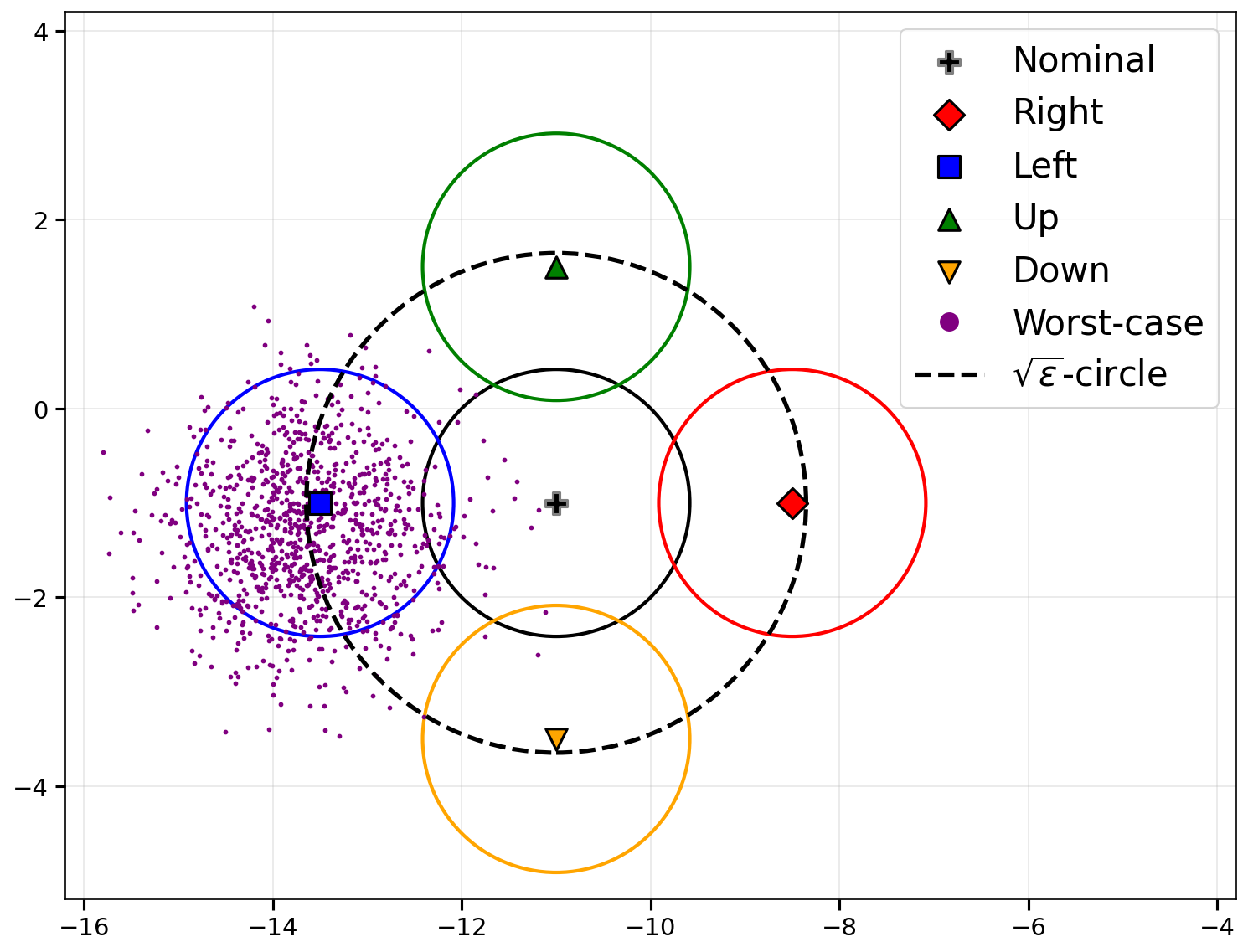}
\includegraphics[width=0.34\linewidth]{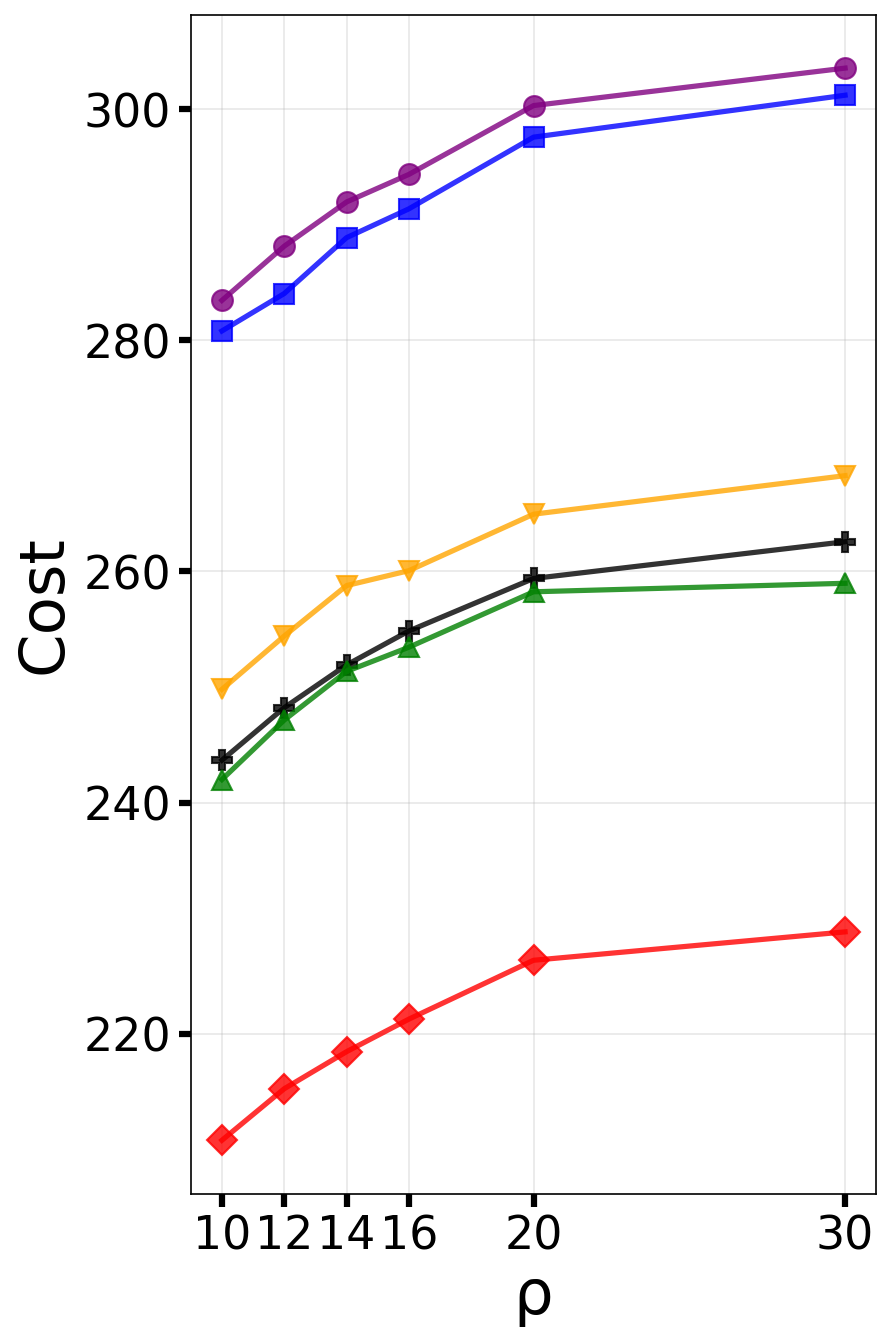}
\caption{Cost comparison within the Wasserstein ambiguity set. \textbf{Left}: Evaluated distributions with the ambiguity-set boundary (dashed). \textbf{Right}: Measured cost versus $\rho$.}
\label{fig:cost_bound}
\label{fig:distribution_visualization}
\label{fig:total_cost_vs_rho}
\end{figure}

\clearpage
\section{Additional Experimental Analysis}
\label{sec:rebuttal_analysis}

\subsection{Measured Computation Cost}
\label{sec:rebuttal_compute}

DRSB starts from a pretrained DSBM drift. \Cref{tab:rebuttal_compute} separates the two stages. The DRSB stage costs approximately $0.54\times$ the Gaussian DSBM time and $0.16\times$ the FFHQ DSBM time. Including pretraining gives approximately $1.54\times$ and $1.16\times$, respectively. Gradient updates account for about 80\% of Gaussian training time and over 90\% on FFHQ, while worst-case optimization accounts for less than 1.5\%. DRSB-W uses up to 90 worst-case gradient steps in these runs; DRSB-S uses Gaussian sampling with a closed-form $\lambda$ update under the linear approximation. FFHQ DRSB draws four Monte Carlo samples per clean sample. These measurements describe the tested workloads and do not establish how runtime scales with dimension.

\begin{table}[htbp]
\centering
\caption{Measured training costs. DRSB columns report the additional stage after DSBM pretraining. Trajectory counts follow the reported training configurations.}
\label{tab:rebuttal_compute}
\small
\begin{tabular}{lrrrr}
\toprule
& Gaussian DSBM & Gaussian DRSB & FFHQ DSBM & FFHQ DRSB \\
\midrule
Iterations & 140 & 50 & 70 & 50 \\
Gradient steps (total) & 224k & 101k & 3.36M & 320k \\
Trajectories & 4,096 & 1,024 & 20,480 & 40,960 \\
Wall-clock time & 754\,s & 404\,s & 8.5\,h & 1.4\,h \\
Stage time / DSBM time & $1\times$ & $0.54\times$ & $1\times$ & $0.16\times$ \\
\bottomrule
\end{tabular}
\end{table}

\subsection{Empirical Convergence}
\label{sec:rebuttal_convergence}

We examine the 2D Gaussian task with $\rho=10$, $\epsilon=12$, and 50 epochs, using a cosine learning-rate schedule from $3\times10^{-7}$ to $10^{-7}$. In \cref{tab:rebuttal_diagnostics}, $R_k=\left|1-\frac{1}{N\epsilon}\sum_i\|X_0^{i*}-\hat X_0^i\|^2\right|$ is the constraint residual, $G_k=\mathbb{E}_{p_1^{u^{(k)},\mu^{(k)}}}[g^\phi(X_1)]$ is the mean terminal critic output, and $U_k$ is the mean squared control magnitude on a fixed clean evaluation set. At the reported epochs $k\geq25$, $R_k<0.005$. The final dual variable is $\lambda_{50}=3.483$, close to the value 3.496 predicted by the active-constraint SOC relation. The terminal critic output and control magnitude change less at later epochs. These summaries measure stabilization, not gradients or stationarity residuals for all three blocks.

\begin{table}[htbp]
\centering
\caption{Reported update diagnostics on the 2D Gaussian task.}
\label{tab:rebuttal_diagnostics}
\small
\setlength{\tabcolsep}{4pt}
\begin{tabular}{lrrrrrrrrrr}
\toprule
$k$ & 5 & 10 & 15 & 20 & 25 & 30 & 35 & 40 & 45 & 50 \\
\midrule
$R_k$ & 0.260 & 0.319 & 0.240 & 0.111 & 0.002 & 0.004 & 0.002 & 0.000 & 0.002 & 0.001 \\
$\lambda_k$ & 6.601 & 5.696 & 4.535 & 3.627 & 3.427 & 3.457 & 3.477 & 3.488 & 3.494 & 3.483 \\
$G_k$ & 0.752 & 1.232 & 1.735 & 2.142 & 2.068 & 1.757 & 1.672 & 1.631 & 1.524 & 1.550 \\
$U_k$ & 22.94 & 22.92 & 23.19 & 23.74 & 24.10 & 24.22 & 24.29 & 24.36 & 24.32 & 24.30 \\
\bottomrule
\end{tabular}
\end{table}

We also evaluate the variational objective
\begin{equation}
\mathcal{L}(u,\mu,g)=\mathbb{E}_{X_0\sim\mu}[V(u,X_0;g)]-\rho\log\mathbb{E}_{X_1\sim\nu}[e^{g(X_1)}],
\end{equation}
where $V(u,x;g)$ includes the control effort and terminal cost $\rho g$. The control effort is estimated with a Riemann sum. The sampled values in \cref{tab:rebuttal_objective} stabilize near 312 after an observed peak of 313.44 at $k=32$. A separate run with constant learning rate $10^{-7}$ for 120 iterations also settles near 312. This provides an additional check beyond the decreasing learning-rate schedule, without proving convergence of the block algorithm.

\begin{table}[htbp]
\centering
\caption{Variational objective at the reported epochs.}
\label{tab:rebuttal_objective}
\small
\begin{tabular}{lrrrrrr}
\toprule
$k$ & 5 & 10 & 20 & 30 & 40 & 50 \\
\midrule
$\mathcal{L}$ & 270.99 & 276.16 & 304.99 & 311.99 & 311.44 & 311.37 \\
\bottomrule
\end{tabular}
\end{table}

\subsection{Empirical Check of the SOC Optimality Condition}
\label{sec:rebuttal_soc}

The simplified adversarial update replaces $\nabla_{X_0}V(u,X_0;g)$ by $-u^\theta(0,X_0)$ using \cref{eq:optimal_control}. The identity is exact for an optimal controller of the fixed-terminal-cost SOC problem, but finite training need not satisfy it. We assess this approximation on the same 2D Gaussian experiment by computing $\nabla_{X_0}V$ with a discrete adjoint recursion requiring one vector-Jacobian product per time step. The controller and terminal cost are held fixed for this gradient evaluation.

\Cref{tab:rebuttal_soc} reports both direction cosines and gradient-to-control magnitude ratios. The cosine tests whether the surrogate points in the cost-gradient direction, while the ratio tests its scale. Across the evaluated epochs, cosines are 0.9996--0.9998 and ratios are 0.9310--1.0761. At epoch 50, the cosine is 0.9998 and the ratio is 1.0089, corresponding to a gradient magnitude approximately 0.9\% larger than the control magnitude. These measurements support approximate satisfaction of the SOC relation on the evaluated samples. They do not establish exact SOC optimality or its validity across other tasks and input distributions.

\begin{table}[htbp]
\centering
\caption{SOC consistency diagnostics on the 2D Gaussian task.}
\label{tab:rebuttal_soc}
\small
\begin{tabular}{lrrrrr}
\toprule
$k$ & 10 & 20 & 30 & 40 & 50 \\
\midrule
$\cos\left(\nabla_{X_0}V,-u^\theta(0,X_0)\right)$ & 0.9997 & 0.9996 & 0.9996 & 0.9997 & 0.9998 \\
$\|\nabla_{X_0}V\|/\|u^\theta(0,X_0)\|$ & 0.9310 & 1.0761 & 1.0249 & 1.0278 & 1.0089 \\
\bottomrule
\end{tabular}
\end{table}

\end{document}